\let\normallog \log
\DeclareMathOperator*{\argmax}{argmax}
\let\log \normallog
\theoremstyle{plain}
\newtheorem{theorem}{Theorem}[section]
\newtheorem{definition}[theorem]{Definition}
\theoremstyle{remark}
\newcommand{\orangetext}[1]{{\color{orange} #1}}
\newcommand{\redtext}[1]{{\color{red} #1}}
\newcommand{\bluetext}[1]{{\color{blue} #1}}
\newcommand{\vtheta}{{ \boldsymbol \theta}}
\newcommand{\vphi}{{ \boldsymbol \phi}}
\newcommand{\vTheta}{\boldsymbol{\Theta}}
\newcommand{\vThetastar}{\boldsymbol{\Theta}^{\star}}
\newcommand{\q}{q}
\newcommand{\Rd}{\mathbb{R}^D}
\newcommand{\hrlace}{h_{\textsc{r}}}
\newcommand{\CLS}{ \texttt{[CLS]}}
\newcommand{\defeq}[0]{\mathrel{\stackrel{\textnormal{\tiny def}}{=}}}
\crefname{section}{\S}{\S\S}
\Crefname{section}{\S}{\S\S}
\crefname{table}{Tab.}{}
\crefname{figure}{Fig.}{}
\crefname{proposition}{Prop.}{}
\crefname{algorithm}{Alg.}{}
\crefname{appendix}{App.}{}
\crefname{lemma}{Lemma}{}
\Crefname{theorem}{Theorem}{}
\crefname{prop}{Proposition}{}
\crefname{claim}{Claim}{}
\crefname{myexample}{Example}{}
\crefname{align}{}{}
\crefname{equation}{Eq.}{eq.}
\newcommand{\R}{\mathbb{R}}
\newcommand{\xx}{\boldsymbol{x}}
\newcommand{\aalpha}{\boldsymbol{\alpha}}
\newcommand{\ttheta}{\boldsymbol{\theta}}
\newcommand{\xxn}{\xx_n}
\newcommand{\entropy}{\mathrm{H}}
\newcommand{\defn}[1]{\textbf{#1}}
\newcommand{\yhat}{\widehat{y}}
\newcommand{\zn}{z_n}
\newcommand{\zzn}{z_n}
\newcommand{\valpha}{\boldsymbol{\valpha}}
\newcommand{\rhodelta}{\rho_{\delta}}
\newcommand{\CAv}{\mathrm{A}_{\Vfam}}
    \theoremstyle{plain}
    \newtheoremstyle{TheoremNum}
        {\topsep}{\topsep}              %%% space between body and thm
        {\itshape}                      %%% Thm body font
        {}                              %%% Indent amount (empty = no indent)
        {\bfseries}                     %%% Thm head font
        {.}                             %%% Punctuation after thm head
        { }                             %%% Space after thm head
        {\thmname{#1}\thmnote{ \bfseries #3}}%%% Thm head spec
    \theoremstyle{TheoremNum}
\newcommand{\rv}[1]{{\color{black} \mathrm{#1}}}
\renewcommand{\R}{{\color{black} \mathbb{R}}}
\newcommand{\X}{{\boldsymbol{ \rv{X}}}}
\newcommand{\Y}{{\rv{Y}}}
\newcommand{\Yadv}{{\YHAT_{\mathrm{a}}}}
\newcommand{\Yrnd}{{\YHAT_{\mathrm{p}}}}
\newcommand{\Z}{{\rv{Z}}}
\newcommand{\ZHAT}{{\rv{\widehat{Z}}}}
\newcommand{\ZHATq}{{\rv{\widehat{Z}}}_q}
\newcommand{\ZHATr}{{\rv{\widehat{Z}}}_r}
\newcommand{\mi}{{\color{black} \mathrm{I}}}
\newcommand{\indk}{{\color{black} \iota_k}}
\newcommand{\gapindh}{{\color{black} \text{GAP}_{\text{ind}}(\rv{\widehat{Y}} \to \rv{Z} \mid h(\X))}}
\newcommand{\Iacc}{\mathrm{I}^{\mathrm{A}}_{\Vfam}}
\newcommand{\indki}{{\color{black} \iota_k(i)}}
\newcommand{\indjk}{{\color{black} \iota_j(k)}}
\newcommand{\indkm}{{\color{black} \iota_m(k)}}
\newcommand{\indkj}{{\color{black} \iota_j(k)}}
\newcommand{\Vfam}{\color{black} \mathcal{V}}
\newcommand{\Vfamdelta}{\color{black} \mathcal{V}^\delta}
\newcommand{\YHAT}{{ \rv{\widehat{Y}}}}
\newcommand{\sizey}{ \color{black} |\mathcal{Y}|}
\newcommand{\CHvdelta}[1]{{\color{black} \mathrm{H}_{\mathcal{V}^{\delta}}(#1)}}
\newcommand{\CHv}[1]{{\color{black} \mathrm{H}_{\mathcal{V}}(#1)}}
\newcommand{\E}[1]{\color{black} \mathop{\mathbb{E}}_{#1}}
\newcommand{\CHvuncond}[1]{{\color{black} \mathrm{H}_{\mathcal{V}}\left(#1\right)}}
\newcommand{\Zemp}{\widetilde{\Z}} 
\newcommand{\Xemp}{\widetilde{\X}}
\newcommand{\Ivxz}{\color{black} \mi_{\mathcal{V}}(\X \to \Z)}
\newcommand{\Ivhxz}{\color{black} \mi_{\mathcal{V}}(h(\X) \to \Z)}
\newcommand{\Ivdeltahxz}{\color{black} \mi_{\mathcal{V}^{\delta}}(h(\X) \to \Z)}
\newcommand{\Ivxzemp}{\color{black} \mi_{\mathcal{V}}(h(\Xemp) \to \Zemp)}
\newcommand{\IvxzNoh}{\color{black} \mi_{\mathcal{V}}(\X \to \Z)}
\newcommand{\Ivyz}{\color{black}
\mi_{\mathcal{V}}( \mathrm{\widehat{Y}} \to \Z)}
\newcommand{\Ivdeltayz}{\color{black}
\mi_{\mathcal{V}^\delta}( \mathrm{\widehat{Y}} \to \Z)}
\newcommand{\dataset}{\color{black} {\mathcal{D}}}
\newcommand{\yset}{\mathcal{Y}}
\newcommand{\zset}{\mathcal{Z}}
\newcommand{\Yhatprof}{\Yrnd}
\newcommand{\Yhatadv}{\Yadv}
\newcommand{\Ivyzadv}{\color{black} \mi_{\mathcal{V}}(\color{black} \Yhatadv \to \Z)}
\newcommand{\Ivyzprof}{\color{black} \mi_{\mathcal{V}}( \Yhatprof \to \Z)}
\newcommand{\loss}{\color{black} \ell}
\newcommand{\Vdelta}{\color{black} \Vfam^{\delta}}
\newcommand{\softmax}{\color{black} \mathrm{softmax}}
\title{Log-linear Guardedness and its Implications}
 \author{Shauli Ravfogel\textsuperscript{\normalfont1,2} \,  Yoav Goldberg\textsuperscript{\normalfont1,2} \,Ryan Cotterell\textsuperscript{\normalfont 3}\\
\textsuperscript{1}Bar-Ilan University \, \textsuperscript{2}Allen Institute for Artificial Intelligence \,
\textsuperscript{3}ETH Zürich \\ 
  {\tt\{\href{mailto:shauli.ravfogel@gmail.com}{shauli.ravfogel}, \href{mailto:yoav.goldberg@gmail.com}{yoav.goldberg}\}@gmail.com}\,
\tt{\href{mailto:ryan.cotterell@inf.ethz.ch}{ryan.cotterell@inf.ethz.ch}}
   }
\newcommand*\iftodonotes{\if@todonotes@disabled\expandafter\@secondoftwo\else\expandafter\@firstoftwo\fi} 
\begin{document}
\maketitle

\begin{abstract}
Methods for erasing human-interpretable concepts from neural representations that assume linearity have been found to be tractable and useful.
However, the impact of this removal on the behavior of downstream classifiers trained on the modified representations is not fully understood.
In this work, we formally define the notion of log-linear guardedness as the inability of an adversary to predict the concept directly from the representation, and study its implications.
We show that, in the binary case, under certain assumptions, a downstream log-linear model cannot recover the erased concept.
However, we demonstrate that a multiclass log-linear model \emph{can} be constructed that indirectly recovers the concept in some cases, pointing to the inherent limitations of log-linear guardedness as a downstream bias mitigation technique.
These findings shed light on the theoretical limitations of linear erasure methods and highlight the need for further research on the connections between intrinsic and extrinsic bias in neural models.
\newline
\newline
\vspace{1.0em} 
\hspace{.5em}\includegraphics[width=1.25em,height=1.25em]{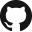}\hspace{.5em}\parbox{\dimexpr\linewidth-3\fboxsep-3\fboxrule}{\url{https://github.com/rycolab/guardedness}}
\end{abstract}

\section{Introduction}
Neural models of text have been shown to represent human-interpretable concepts, e.g., those related to the linguistic notion of morphology \citep{vylomova2017word}, syntax \citep{linzen2016assessing}, semantics \citep{belinkov2017evaluating}, as well as extra-linguistic notions, e.g., gender distinctions \citep{caliskan2017semantics}. 
Identifying and erasing such concepts from neural representations is known as \defn{concept erasure}.
Linear concept erasure in particular has gained popularity due to its potential for obtaining formal guarantees and its empirical effectiveness \cite{bolukbasi2016man, dev2019attenuating, ravfogel-etal-2020-null, dev2020oscar, kaneko2021debiasing, shao-etal-2023-gold, shao2023erasure, kleindessner2023efficient,belrose2023leace}.\looseness=-1

A common instantiation of concept erasure is removing a concept (e.g., gender) from a representation (e.g., the last hidden representation of a transformer-based language model) such that it cannot be predicted by a log-linear model.
Then, one fits a \emph{secondary} log-linear model for a downstream task over the erased representations. 
For example, one may fit a log-linear sentiment analyzer to predict sentiment from gender-erased representations. 
The hope behind such a pipeline is that, because the concept of gender was erased from the representations, the predictions made by the log-linear sentiment analyzer are oblivious to gender.
Previous work \citep{ravfogel-etal-2020-null, elazar2021amnesic, jacovi2021contrastive, ravfogel2022linear} has implicitly or explicitly relied on this assumption that erasing concepts from representations would also result in a downstream classifier that was oblivious to the target concept.\looseness=-1

In this paper, we formally analyze the effect concept erasure has on a downstream classifier.
We start by formalizing concept erasure using \citeposs{xu2020theory} $\mathcal{V}$-information.\footnote{We also consider a definition based on accuracy.} 
We then spell out the related notion of \defn{guardedness} as the inability to predict a given concept from concept-erased representations using a specific family of classifiers.
Formally, if $\Vfam$ is the family of distributions realizable by a log-linear model, then we say that the representations are guarded against gender with respect to $\Vfam$.
The theoretical treatment in our paper specifically focuses on \defn{log-linear guardedness}, which we take to mean the inability of a \emph{log-linear} model to recover the erased concept from the representations.
We are able to prove that when the downstream classifier is binary valued, such as a binary sentiment classifier, its prediction indeed cannot leak information about the erased concept (\cref{sec:binary}) under certain assumptions. 
On the contrary, in the case of multiclass classification with a log-linear model, we show that predictions \emph{can} potentially leak a substantial amount of information about the erased concept, thereby recovering the guarded information completely.

The theoretical analysis is supported by experiments on commonly used linear erasure techniques (\cref{sec:experiments}). 
While previous authors (\citealt{goldfarb2021intrinsic}, \citealt{orgad2022gender}, \textit{inter alia}) have empirically studied concept erasure's effect on downstream classifiers, to the best of our knowledge, we are the first to study it theoretically.
Taken together, these findings suggest that log-linear guardedness may have limitations when it comes to preventing information leakage about concepts and should be assessed with extreme care, even when the downstream classifier is merely a log-linear model.\looseness=-1

\section{Information-Theoretic Guardedness}\label{sec:info-theory}
In this section, we present an information-theoretic approach to guardedness,
which we couch in terms of  $\mathcal{V}$-information \cite{xu2020theory}.

\subsection{Preliminaries}
We first explain the concept erasure paradigm \cite{ravfogel2022linear}, upon which our work is based.
Let $\X$ be a representation-valued random variable.
In our setup, we assume representations are real-valued, i.e., they live in $\R^D$.
Next, let $\Z$ be a binary-valued random variable that denotes a protected attribute, e.g., binary gender.\footnote{Not all concepts are binary, but our analysis in \Cref{sec:info-theory} makes use of this simplifying assumption.} 
We denote the two binary values of $\Z$ by $\zset \defeq \{\bot, \top\}$.
We assume the existence of a \defn{guarding function} $h : \R^D \rightarrow \R^D$ that, when applied to the representations, removes the ability to predict a concept $\Z$ given concept by a specific family of models. 
Furthermore, we define the random variable $\YHAT = t(h(\X))$ where $t : \R^D \rightarrow \yset \defeq \{0, \dots, \sizey\}$ is a function\footnote{The elements of $\yset$ are denoted $y$.} that corresponds to a linear classifier for a downstream task.
For instance, $t$ may correspond to a linear classifier that predicts the sentiment of a representation.
\looseness=-1

Our discussion in this paper focuses on the case when the function $t$ is derived from the argmax of a log-linear model, i.e., in the binary case we define $\YHAT$'s conditional distribution given $h(\X)$ as
\begin{equation}\label{eq:binary-yhat}
    p(\YHAT = y \mid h(\X) = h(\xx)) = 
\begin{cases}

    1, &\textbf{if } y = y^\ast \\
    
    0, &\textbf{else}
\end{cases} 
\end{equation}
where $\vtheta \in \R^D$ is a parameter column vector, $\phi \in \R$ is a scalar bias term, and
\begin{equation}
y^\ast = \begin{cases} 1, \quad  \textbf{if}\quad \vtheta^{\top} h(\xx) + \phi > 0 \\
0, \quad  \textbf{else}\end{cases}
\end{equation}
And, in the multivariate case we define $\YHAT$'s conditional distribution given $h(\X)$ as
\begin{equation}\label{eq:multi-yhat}
    p(\YHAT = y \mid h(\X) = h(\xx)) = 
\begin{cases}

    1, &\textbf{if } y = y^\ast \\
    
    0, &\textbf{else}
\end{cases} 
\end{equation}
where $y^\ast = \argmax_{y' \in \yset}\,\, (\vTheta^{\top}h(\xx)  + \vphi)_{y'}$
and $\vTheta_{y} \in \R^D$ denotes the $y^{\text{th}}$ column of $\vTheta \in \R^{D \times K}$, a parameter matrix, and $\vphi \in \R^K$ is the bias term.
Note $K$ is the number of classes.\looseness=-1

\subsection{$\mathcal{V}$-Information}

Intuitively, a set of representations is guarded if it is not possible to predict a protected attribute $z \in \zset$ from a representation $\xx \in \R^D$ using a specific predictive family.
As a first attempt, we naturally formalize predictability in terms of mutual information.
In this case, we say that $\Z$ is not predictable from $\X$ if and only if $\mi(\X; \Z) = 0$.
However, the focus of this paper is on \emph{linear} guardedness, and, thus,
we need a weaker condition than simply having the mutual information $\mi(\X; \Z) = 0$.
We fall back on \citeposs{xu2020theory} framework of $\mathcal{V}$-information, which introduces a generalized version of mutual information.
In their framework, they restrict the predictor to a family of functions $\Vfam$, e.g., the set of all log-linear models.\looseness=-1

We now develop the information-theoretic background to discuss $\mathcal{V}$-information. 
The \defn{entropy} of a random variable is defined as
\begin{equation}
   \entropy(\Z) \defeq - \E{z\sim p(\Z)} {\log p(z)}
\end{equation}
\citet{xu2020theory} analogously define the \defn{conditional $\Vfam\text{-entropy}$} as follows
\begin{equation}\label{eq:v-conditional-entropy}
    \CHv{\Z \mid \X} \defeq -\sup_{q \in \Vfam} \E{(\xx,z) \sim p(\X, \Z)} {\log q(z \mid \xx)}
\end{equation}
The \defn{$\Vfam\text{-entropy}$} is a special case of \Cref{eq:v-conditional-entropy} without conditioning on another random variable, i.e.,\looseness=-1
\begin{equation}
    \CHvuncond{\Z} \defeq -\sup_{q \in \Vfam} \E{z \sim p(\Z)} {\log q(z)}
\end{equation}
\citet{xu2020theory} further define the \defn{$\Vfam$-information}, a generalization of mutual information, as follows
\begin{equation}
    \IvxzNoh{} \defeq \CHvuncond{\Z} -  \CHv{\Z \mid \X}
    \label{def:vinfo}
\end{equation}
In words, \cref{def:vinfo} is the \emph{best} approximation of the mutual information realizable by a classifier belonging to the predictive family $\Vfam$. 
Furthermore, in the case of log-linear models, \Cref{def:vinfo} can be approximated empirically by calculating the negative log-likelihood loss of the classifier on a given set of examples, as $\CHvuncond{\Z}$ is the entropy of the label distribution, and $\CHv{\Z \mid \X}$ is the minimum achievable value of the cross-entropy loss.\looseness=-1

\subsection{Guardedness} 
Having defined $\Vfam$-information, we can now formally define guardedness as the condition where the $\Vfam$-information is small. 

\begin{definition}[$\Vfam$-Guardedness]
Let $\X$ be a representation-valued random variable and let $\Z$ be an attribute-valued random variable.
Moreover, let $\Vfam$ be a predictive family.
A guarding function $h$ $\varepsilon$-\defn{guards} $\X$ with respect to $\Z$ over $\Vfam$ if $\Ivhxz < \varepsilon$.\looseness=-1
\end{definition}

\begin{definition}[Empirical $\Vfam$-Guardedness]
Let $\dataset = \{ (\xxn, \zzn) \}_{n=1}^N$ where $(\xxn, \zzn) \sim p(\X, \Z)$.
Let $\Xemp$ and $\Zemp$ be random variables over $\R^D$ and $\zset$, respectively, whose distribution corresponds to the marginals of the empirical distribution over $\dataset$. 
We say that a function $h(\cdot)$ \defn{empirically} $\varepsilon$-\defn{guards} $\dataset$ with respect to the family $\Vfam$ if $\Ivxzemp < \varepsilon$.

\label{def:information-theoretic-guardedness}
\end{definition}

In words, according to \cref{def:information-theoretic-guardedness}, a dataset is log-linearly guarded if no linear classifier can perform better than the trivial classifier that completely ignores $\rv{X}$ and always predicts $\Z$ according to the proportions of each label. The commonly used algorithms that have been proposed for linear subspace erasure can be seen as approximating the condition we call log-linear guardedness \citep{ravfogel-etal-2020-null, ravfogel2022linear, ravfogel2022adversarial}. 
Our experimental results focus on empirical guardedness, which pertains to practically measuring guardedness on a finite dataset. However, determining the precise bounds and guarantees of empirical guardedness is left as an avenue for future research.

\section{Theoretical Analysis}
In the following sections, we study the implications of guardedness on \emph{subsequent} linear classifiers.
Specifically, if we construct a third random variable $\YHAT = t(h(\rv{X}))$ where $t : \Rd \rightarrow \yset$ is a function, what is the degree to which $\YHAT$ can reveal information about $\Z$?
%trained over the guarded representations.
As a practical instance of this problem, suppose we impose $\varepsilon$-guardedness on the last hidden representations of a transformer model, i.e., $\X$ in our formulation, and then fit a linear classifier $t$ over the guarded representations $h(\X)$ to predict sentiment. 
Can the predictions of the sentiment classifier indirectly leak information on gender? 
For expressive $\mathcal{V}$, the data-processing inequality \citep[\S2.8]{cover1991elements} applied to the Markov chain $\X \to \YHAT \to \Z$
tells us the answer is no.
The reason is that, in this case, $\mathcal{V}$-information is equivalent to mutual information and the data processing inequality tells us such leakage is \emph{not} possible.
However, the data processing inequality does not generally apply to $\mathcal{V}$-information \cite{xu2020theory}.
Thus, it \emph{is} possible to find such a predictor $t$ for less expressive $\mathcal{V}$.
Surprisingly, when $|\mathcal{Y}| = 2$, we are able to prove that constructing such a $t$ that leaks information is impossible under a certain restriction on the family of log-linear models.\looseness=-1

\subsection{Problem Formulation}
%Let $\calY$ be a finite set with cardinality greater than 2.
We first consider the case where $|\yset| = 2$.
\subsection{A Binary Downstream Classifier}
\label{sec:binary}
We begin by asking whether the predictions of a binary log-linear model trained over the guarded set of representations can leak information on the protected attribute.
Our analysis relies on the following simplified family of log-linear models.
\begin{definition}[Discretized Log-Linear Models]
The family of \defn{discretized binary log-linear models} with parameter $\delta \in (0,1)$ is defined as 
\begin{equation}
\Vdelta \defeq \left\{f \Bigm| 
\begin{array}{l}
    f(0) = 
\rhodelta(\sigma(\aalpha^{\top} \xx + \gamma))\\
f(1) = \rhodelta(1 - \sigma(\aalpha^{\top} \xx + \gamma))
\end{array} \!\! \right\}
\end{equation}
with $\aalpha \in \R^D$, $\gamma \in \R$, $\sigma$ being the logistic function, and where we define the $\delta$-discretization function as\looseness=-1
\begin{equation}
     \rhodelta(p) \defeq
\begin{cases}
  \delta ,& \textbf{if } p \geq \frac{1}{2} \\
    1 - \delta,              & \textbf{else}
    \end{cases} 
\label{eq:discrete-logistic}
\end{equation}
In words, $\rhodelta$ is a function that maps the probability value to one of two possible values.
Note that ties must be broken arbitrarily in the case that $p=\frac{1}{2}$ to ensure a valid probability distribution.\looseness=-1
\label{def:discrete-logistic}
\end{definition}

\begin{figure*}
  \centering
  \subfigure[Log-linearly guarded data in $\R^2$ with axis-aligned clusters.] {\includegraphics[scale=0.25]{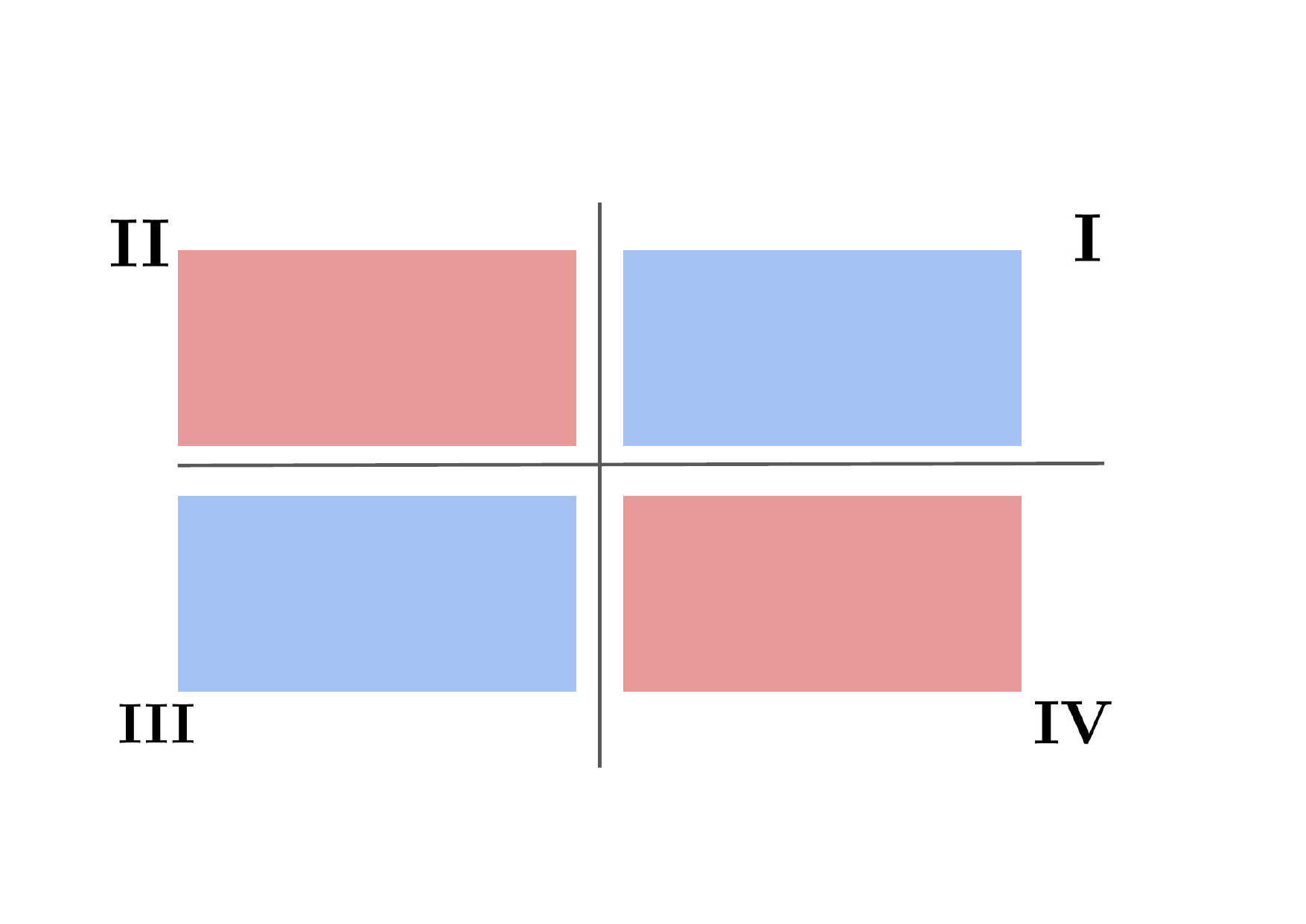} \label{fig:sep}}\quad
  \subfigure[Log-inearly guarded data in $\R^2$ with clusters that are not axis-aligned.] {\includegraphics[scale=0.25]{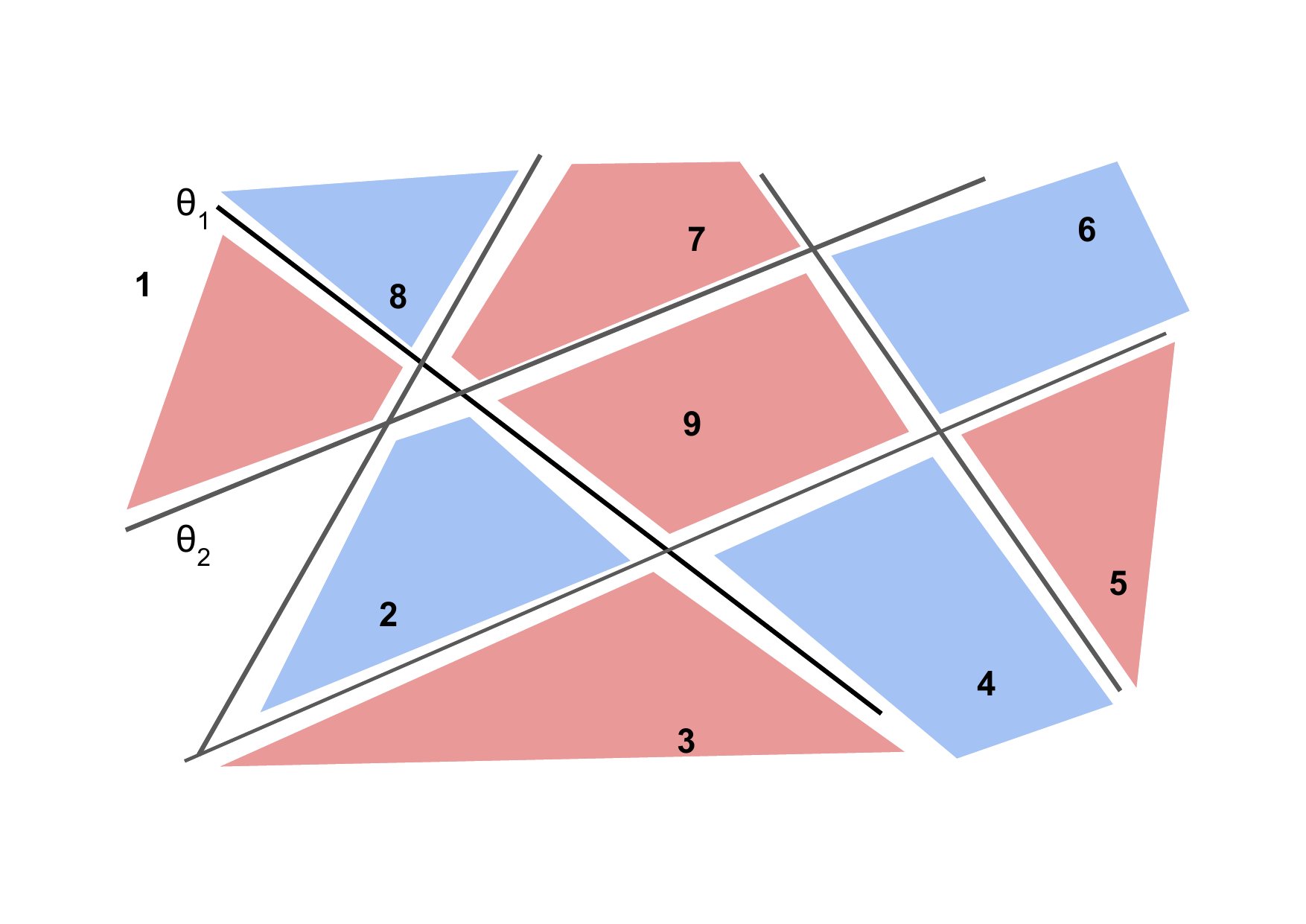} \label{fig:multiclass}}
  \caption{Construction of a log-linear model that breaks log-linear guardedness.
  \setlength{\belowcaptionskip}{-15pt}
  }
  \label{fig:illustration}
\end{figure*}

Our analysis is based on the following simple observation (see \cref{lemma:composition} in the Appendix) that the composition of two $\delta$-discretized log-linear models is itself a $\delta$-discretized log-linear model.
Using this fact, we show that when $|\yset| = |\zset| = 2$,
and the predictive family is the set of $\delta$-discretized binary log-linear models, $\varepsilon$-guarded representations $h(\X)$ cannot leak information through a downstream classifier.\looseness=-1

\begin{theorem}
Let $\Vfamdelta$ be the family of $\delta$-discretized log-linear models, and let $\X$ be a representation-valued random variable.
Define $\YHAT$ as in \Cref{eq:binary-yhat}, then $\Ivdeltahxz < \varepsilon$ implies $\Ivdeltayz < \varepsilon$.\looseness=-2

\label{claim:binary-y-v-info}
\end{theorem}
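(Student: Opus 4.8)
The plan is to reduce the statement to the composition lemma (\cref{lemma:composition}) together with a change-of-variables identity. First, expand both informations in terms of conditional $\Vfamdelta$-entropies, $\Ivdeltahxz = \CHvdelta{\Z} - \CHvdelta{\Z \mid h(\X)}$ and $\Ivdeltayz = \CHvdelta{\Z} - \CHvdelta{\Z \mid \YHAT}$, and observe that the unconditional term $\CHvdelta{\Z}$ depends only on $p(\Z)$ and on $\Vfamdelta$, so it is the same in both. It therefore suffices to prove the single inequality $\CHvdelta{\Z \mid \YHAT} \ge \CHvdelta{\Z \mid h(\X)}$ --- unfolding the definition of conditional $\Vfamdelta$-entropy, this reads $\sup_{q \in \Vfamdelta} \E{(\yhatsmall, z) \sim p(\YHAT, \Z)}{\log q(z \mid \yhatsmall)} \le \sup_{q \in \Vfamdelta} \E{(\xx, z) \sim p(\X, \Z)}{\log q(z \mid h(\xx))}$ --- after which $\Ivdeltayz \le \Ivdeltahxz < \varepsilon$ is immediate.

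To establish this inequality between suprema, fix any $q \in \Vfamdelta$ used as a predictor of $\Z$ from $\YHAT$ (so its input is the $\{0,1\}$-valued realization of $\YHAT$), and set $q' \defeq q \circ t$, i.e.\ $q'(z \mid h(\xx)) \defeq q(z \mid t(h(\xx)))$. Two facts are needed. (i) $q' \in \Vfamdelta$: the map $t$ of \cref{eq:binary-yhat} is the $\argmax$ decision rule of the $\delta$-discretized log-linear model over $\Rd$ with parameters $\aalpha = \vtheta$, $\gamma = \phi$, so $q'$ is a composition of two $\delta$-discretized log-linear models and hence lies in $\Vfamdelta$ by \cref{lemma:composition}. (ii) Since $\YHAT = t(h(\X))$ by construction, the law of $(t(h(\X)), \Z)$ equals the law of $(\YHAT, \Z)$, so by change of variables $\E{(\xx, z) \sim p(\X, \Z)}{\log q'(z \mid h(\xx))} = \E{(\xx, z) \sim p(\X, \Z)}{\log q(z \mid t(h(\xx)))} = \E{(\yhatsmall, z) \sim p(\YHAT, \Z)}{\log q(z \mid \yhatsmall)}$. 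Thus every value attained in the supremum defining $\CHvdelta{\Z \mid \YHAT}$ is attained by an admissible $q' \in \Vfamdelta$ in the supremum defining $\CHvdelta{\Z \mid h(\X)}$, which is exactly the required inequality.

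The one delicate step --- and the one I expect to be the real obstacle --- is (i), the verification that $q' = q \circ t \in \Vfamdelta$, i.e.\ that \cref{lemma:composition} genuinely applies here. The subtlety sits on the decision hyperplane $\{h(\xx) \in \Rd : \vtheta^{\top} h(\xx) + \phi = 0\}$, which may carry positive probability mass: there $t$ outputs the label $0$, whereas a $\delta$-discretized log-linear model with $\aalpha = \vtheta$, $\gamma = \phi$ evaluates to $\rhodelta(\sigma(0)) = \rhodelta(\tfrac12)$, a value governed by the tie-breaking rule in \cref{eq:discrete-logistic}. Resolving it amounts to the observation that $\Vfamdelta$ contains a model realizing either tie-breaking convention, so one may choose the convention under which $q'$ agrees with $q \circ t$ everywhere --- which is precisely what \cref{lemma:composition} packages. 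The remaining steps (rewriting $\Vfamdelta$-information through conditional $\Vfamdelta$-entropies, cancelling $\CHvdelta{\Z}$, and the change-of-variables identity) are routine.
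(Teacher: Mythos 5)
Your proof is correct and follows essentially the same route as the paper's: both reduce the claim to the composition lemma by observing that any predictor of $\Z$ from $\YHAT$ can be simulated by its composition with the threshold rule $t$, which remains a $\delta$-discretized log-linear model, and then transfer the expectation via the change of variables $\YHAT = t(h(\X))$. The only difference is presentational---you establish the inequality $\Ivdeltayz \le \Ivdeltahxz$ directly, whereas the paper phrases the identical simulation argument as a proof by contradiction---and your explicit handling of the tie-breaking subtlety on the decision hyperplane is, if anything, slightly more careful than the paper's.
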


\begin{proof}
Define the hard thresholding function 
\begin{equation}\label{eq:threshhold}
    \tau(x) \defeq \begin{cases} 1, &\textbf{if}\quad x > 0 \\
    0, &\textbf{else}
    \end{cases}
\end{equation}
We assume, by way of contradiction, that $\Ivdeltahxz < \varepsilon$, but $\Ivdeltayz > \varepsilon$. 
We start by algebraically manipulating $\Ivdeltayz$:
\begin{align}
&\mi_{\mathcal{V}^\delta}( \mathrm{\widehat{Y}} \to \Z) = \CHvdelta{\Z} - \CHvdelta{\Z\mid \YHAT} \nonumber \\
%\iff &\CHv{\Z} - \CHv{\Z\mid \YHAT} &> \varepsilon  \\
 &=\CHvdelta{\Z} + \sup_{q \in \Vfamdelta} \E{(z, y) \sim p} {\!\!\!\!\log q(z \mid y)} \\
  &=\CHvdelta{\Z} + \sup_{q \in \Vfamdelta} \E{(z, \xx) \sim p} { \!\!\!\!\log q(z \mid \tau(\vtheta^{\top} h(\xx) + \phi))} \nonumber
\end{align}
for some $\vtheta$ and $\phi$ as in the definition of $t$ in \Cref{eq:binary-yhat}.
Now, by \cref{lemma:composition}, we note that, for all $q \in \Vfamdelta$, there exists a classifier $r \in \Vfamdelta$ such that $r(z \mid h(\xx)) = q(z \mid \tau(\vtheta^{\top}h(\xx) + \phi))$. 
This implies that $\Ivdeltahxz \geq \mi_{\mathcal{V}^\delta}( \mathrm{\widehat{Y}} \to \Z)  > \varepsilon$,\footnote{
\Cref{lemma:composition} only guarantees that a classifier of the form $q(z \mid \tau(\vtheta^{\top} h(\xx) + \phi)$, where $q \in \Vdelta$, can be converted into a classifier $r(z\mid h(\xx)) \in \Vdelta$.
However, we have no proof of the opposite implication. 
Hence, we have only shown $\Ivdeltahxz \geq \mi_{\mathcal{V}^\delta}( \mathrm{\widehat{Y}} \to \Z)$.
} contradicting the assumption that $\Ivdeltahxz < \varepsilon$.
Thus, $\Ivdeltayz < \varepsilon$, as desired.\looseness=-1
\end{proof}

\subsection{A Multiclass Downstream Classifier}
\label{sec:multiclass}

The above discussion shows that when both $\Z$ and $\Y$ are binary, $\varepsilon$-log-linear guardedness with respect to the family of discretized log-linear models (\cref{def:discrete-logistic}) implies limited leakage of information about $\Z$ from $\YHAT$. 
It was previously implied \cite{ravfogel-etal-2020-null,elazar2021amnesic} that linear concept erasure prevents information leakage about $\Z$ through the labeling of a log-linear classifier $\YHAT$, i.e., it was assumed that \cref{claim:binary-y-v-info} in \cref{sec:binary} can be generalized to the multiclass case.
Specifically, it was argued that a subsequent linear layer, such as the linear language-modeling head, would not be able to recover the information because it is linear.
In this paper, however, we note a key flaw in this argument.
If the data is log-linearly guarded, then it is easy to see that the \emph{logits}, which are a linear transformation of the guarded representation, cannot encode the information.
However, multiclass classification is usually performed by a softmax classifier, which adds a non-linearity.
Note that the decision boundary of the softmax classifier for every pair of labels is linear since class $i$ will have higher softmax probability than class $j$ if, and only if, $(\ttheta_i-\ttheta_j)^{\top}\xx > 0$.

Next, we demonstrate that this is enough to break guardedness. 
We start with an example. 
Consider the data in $\R^2$ presented in \cref{fig:sep}, where the distribution $p(\X, \Z)$ has 4 distinct clusters, each with a different label from $\zset$, corresponding to Voronoi regions \cite{voronoi1908nouvelles} formed by the intersection of the axes. 
The \redtext{red} clusters correspond to $\Z = \top$ and the \bluetext{blue} clusters correspond to $\Z = \bot$.
The data is taken to be log-linearly guarded with respect to $\Z$.\footnote{Information-theoretic guardedness depends on the density over $p(\X)$, which is \emph{not} depicted in the illustrations in \cref{fig:sep}.}
Importantly, we note that knowledge of the quadrant (i.e., the value of $\Y$), renders $\Z$ recoverable by a 4-class log-linear model.

Assume the parameter matrix $\boldsymbol{\Theta} \in \R^{4 \times 2}$ of this classifier is composed of four columns $\boldsymbol{\theta}_1, \boldsymbol{\theta}_2, \boldsymbol{\theta}_3, \boldsymbol{\theta}_4$ such that $\boldsymbol{\theta}_1 = \alpha \cdot  [1,1]^{\top}, \boldsymbol{\theta}_2 = \alpha \cdot [-1,1]^{\top}, \boldsymbol{\theta}_3=\alpha \cdot [-1,-1]^{\top}, \boldsymbol{\theta}_4 = \alpha \cdot [1,-1]^{\top}$, for some $\alpha > 0$.
These directions encode the quadrant of a point: When the norm of the parameter vectors is large enough, i.e., for a large enough $\alpha$, the probability of class $i$ under a log-linear model will be arbitrarily close to 1 if, and only if, the input is in the $i^\text{th}$ quadrant and arbitrarily close to 0 otherwise.
Given the information about the quadrant, the data is rendered \emph{perfectly linearly separable}.
Thus, the labels $\YHAT$ predicted by a multiclass softmax classifier can recover the linear separation according to $\Z$.\looseness=-1 

This argument can be generalized to a separation that is not axis-aligned (\cref{fig:multiclass}). 

\begin{definition}
Let $\ttheta_1, \dots, \ttheta_K$ be column vectors orthogonal to corresponding linear subspaces, and let $R_1, \dots, R_M$ be the Voronoi regions formed by their intersection (\cref{fig:multiclass}). Let $p(\X,\Z)$ be any data distribution such that any two points in the same region have the same value of $\Z$:\looseness=-1
\begin{equation}
\xx_i \in R_k \wedge \xx_j \in R_k \Longrightarrow z_i = z_j
\end{equation}
for all $(\xx_i,z_i), (\xx_j,z_j) \sim
p(\X,\Z)$ and for all Voronoi regions $R_k$.
We call such distribution a \defn{$K$-Voronoi} distribution.
\label{def:voronoi}
\end{definition}

\begin{theorem}
Fix $\varepsilon > 0$.
Let $p(\X,\Z)$ be a $K$-Voronoi distribution, and let $h$ linearly $\varepsilon$-guard $\X$ against $\Z$ with respect to the family $\Vfam$ of log-linear models.
Then, for every $\eta>0$, there exists a $K$-class log-linear model such that $\Ivyz > 1- \eta$.\footnote{In base two, $\log_2(2) = 1$ is the maximum achievable $\Vfam$-information value in the binary case.}\looseness=-1

\label{lem:softmax}
\end{theorem}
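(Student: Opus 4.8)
The plan is to construct an explicit $K$-class log-linear model whose hard-$\argmax$ label $\YHAT$ coincides, almost everywhere, with the index of the Voronoi region containing the guarded representation $h(\X)$, and then to observe that $\Z$ — being constant on each region — is a deterministic function of $\YHAT$. The conceptual point is that $\varepsilon$-guardedness only forbids a \emph{single} linear decoder of $\Z$ from $h(\X)$, whereas the $\argmax$ of $K$ affine functionals is a genuinely nonlinear map, expressive enough to realize exactly the $K$ Voronoi cells of \cref{def:voronoi} as its decision regions; so the guardedness hypothesis is entirely consistent with the construction, which is precisely what makes the statement interesting rather than contradictory.

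For the construction, I would take $\vTheta \in \R^{D \times K}$ with $k^{\text{th}}$ column $\vTheta_k = \ttheta_k$, together with a bias vector $\vphi$ chosen to match the (possibly shifted, cf.\ \cref{fig:multiclass}) arrangement of the regions, so that $y^{\ast}(\xx) = \argmax_{k}\,(\vTheta^{\top} h(\xx) + \vphi)_k$ is the unique index $k$ with $h(\xx) \in R_k$. This uses two facts: (i) the $R_k$ are, as \cref{fig:multiclass} and the running example make clear, the Voronoi cells of $\ttheta_1,\dots,\ttheta_K$ under dot-product similarity — one cell per direction, so $M = K$ in the relevant configurations — with pairwise boundaries $(\ttheta_i - \ttheta_j)^{\perp}$; and (ii) the region of a point depends only on $h(\X)$, equivalently the $\ttheta_k$ may be taken inside $\range(h)$, which is implicit in setting up a $K$-Voronoi distribution relative to the guarding map, so that $\ttheta_k^{\top} h(\xx) = \ttheta_k^{\top}\xx$. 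Note that, unlike in the softmax discussion preceding \cref{def:voronoi}, no large scaling $\alpha$ is needed, because $\YHAT$ in \cref{eq:multi-yhat} is a hard $\argmax$, which is scale-invariant; hence $\YHAT = k$ exactly whenever $h(\X) \in R_k$, outside the measure-zero boundaries.

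It then remains to evaluate $\Ivyz = \CHvuncond{\Z} - \CHv{\Z \mid \YHAT}$. Relabel the $K$ columns of $\vTheta$ so that all classes assigned to regions with $\Z = \top$ receive the smallest indices; then $\Z$ is a \emph{threshold} function of the integer label $\YHAT$, which a log-linear model can read off with confidence arbitrarily close to one. Hence $\CHv{\Z \mid \YHAT}$ can be driven below any prescribed $\eta > 0$, while $\CHvuncond{\Z} = \entropy(\Z)$ because constant distributions over $\zset$ lie in $\Vfam$. Combining, $\Ivyz > \entropy(\Z) - \eta$ for the constructed $K$-class model, and $\entropy(\Z) = 1$ bit in the balanced case — the maximal achievable value, per the footnote. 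The guardedness hypothesis is invoked only to witness nontriviality: a $K$-Voronoi distribution can have $\Ivhxz$ arbitrarily small — e.g., the symmetric XOR configuration of \cref{fig:sep} — so the very representations from which no log-linear model recovers $\Z$ nonetheless leak $\Z$ completely through a downstream log-linear classifier's label.

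The step I expect to need the most care is the geometric bookkeeping in the second paragraph: pinning down precisely which regions \cref{def:voronoi} intends, reconciling the ``$M$ regions versus $K$ functionals'' count (arguing $M = K$ for the relevant configurations, including the shifted case where the biases $\vphi$ are essential), and checking that composing with $h$ leaves the cell structure — hence the identity ``$\YHAT = $ region index'' — intact. Everything downstream of that, namely the $\mathcal{V}$-information computation and the $\eta$-limit, is routine.
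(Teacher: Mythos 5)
Your high-level strategy matches the paper's — build a multiclass model whose $\argmax$ label identifies the Voronoi region, then note that $\Z$ is a deterministic (indeed threshold/one-hot-linear) function of that label and push the residual $\mathcal{V}$-entropy below $\eta$ — and your closing $\mathcal{V}$-information computation, your observation that the hard $\argmax$ in \cref{eq:multi-yhat} makes the paper's $\alpha\to\infty$ scaling unnecessary, and your caveat that $\CHvuncond{\Z}=1$ only in the balanced case are all sound. But the central construction is wrong, and it is exactly the step you flagged as needing care. In \cref{def:voronoi} the regions $R_1,\dots,R_M$ are the cells of the \emph{hyperplane arrangement} $\{\xx : \ttheta_i^{\top}\xx = 0\}_{i=1}^{K}$, i.e., a region is determined by the sign pattern $\left(\mathrm{sign}(\ttheta_1^{\top}\xx),\dots,\mathrm{sign}(\ttheta_K^{\top}\xx)\right)$; they are \emph{not} the dot-product Voronoi cells of the vectors $\ttheta_1,\dots,\ttheta_K$, and in general $M \gg K$. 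The running example of \cref{fig:sep} already refutes your identification: there $K=2$ (normals $e_1,e_2$), $M=4$ (the quadrants), and your classifier $\argmax_k(\ttheta_k^{\top}\xx+\phi_k)=\argmax(x_1+\phi_1,\,x_2+\phi_2)$ partitions the plane by the single line $x_1-x_2=\phi_2-\phi_1$, which cannot distinguish the four quadrants; since $\Z$ there is the XOR of the two signs, $\Z$ is not a function of your label and your second-stage classifier gains nothing. (The theorem's phrase ``$K$-class'' is the paper's own sloppiness — its proof actually uses one class per \emph{region}.)

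What the paper does instead, and what your proposal is missing, is the signed-sum construction: for each region $j$ with sign pattern $\iota_j(\cdot)\in\{\pm1\}^K$, set the $j^{\text{th}}$ column of $\vThetastar$ to $\vtheta_j^{\star} = \alpha\sum_{k}\iota_j(k)\,\ttheta_k$. Then for $\xx\in R_j$ and any $m\neq j$ one has $(\vtheta_j^{\star}-\vtheta_m^{\star})^{\top}\xx = \alpha\sum_k(\iota_j(k)-\iota_m(k))\,\ttheta_k^{\top}\xx$, where each summand is zero when the two patterns agree at $k$ and strictly positive when they disagree (the sign of $\iota_j(k)-\iota_m(k)$ matches the sign of $\ttheta_k^{\top}\xx$ by definition of $\iota_j$), so the $\argmax$ recovers the region index. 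Proving that this sum is strictly positive is the substantive content of the proof; without it, or some equivalent device for reading off the full sign pattern from a single $\argmax$, the reduction ``$\YHAT=$ region index'' does not go through, and everything downstream of it collapses. Your point (ii) about working in $\range(h)$ and your relabel-and-threshold decoding are fine and can be kept once the construction is repaired.
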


\begin{proof}
 By assumption, the support of $p(\X)$ is divided up into $K$ Voronoi regions, each with a label from $\zset$.
 See \cref{fig:illustration} for an illustrative example.

Define the region identifier $\indk(i)$ for each region $k$ as follows\looseness=-1
\begin{equation}
     \indki \defeq  
\begin{cases}
   1,& \textbf{if } \ttheta_i^{\top} \xx > 0 \textbf{ for } \xx \in R_k\\
     -1,              & \textbf{if } \ttheta_i^{\top} \xx < 0 \textbf{ for } \xx \in R_k
\end{cases}
\end{equation}
We make the simplifying assumption that points $\xx$ that lie on line $\ttheta_i^{\top} \xx$ for any $i$ occur with probability zero.
Consider a $K$-class log-linear model with a parameter matrix $\vThetastar \in \R^{D \times K}$ that contains, in its $j^{\text{th}}$ column, the vector $\vtheta_j^\star \defeq \alpha \sum_{k=0}^K \indjk \vtheta_k$, i.e., we sum over all $\vtheta_k$ and give positive weight to a vector $\vtheta_k$ if a positive dot product with it is a necessary condition for a point $\xx$ to belong to the $k^{\text{th}}$ Voronoi region.
Additionally, we scale the parameter vector by some $\alpha > 0$.
Let $\xx \in R_j$ and let $R_m$ be a Voronoi region such that $j \neq m$. We next inspect the ratio 
\begin{subequations}
\begin{align}
 r(\alpha) &\defeq \frac{\softmax({\vThetastar}^{\top}\xx)_j}{\,\,\softmax({\vThetastar}^{\top}\xx)_m}\\
&= e^{\left({\ttheta^{\star}}_j - \ttheta_m^\star\right)^{\top} \xx} \\
    &= e^{ {(\alpha \sum_{k=0}^K \indkj \ttheta_k - \alpha \sum_{k=0}^K \indkm \ttheta_k)}^{\top} \xx} \nonumber \\
   &= e^{ \alpha {(\sum_{k=0}^K (\indkj-\indkm)\ttheta_k)}^{\top} \xx } 
\end{align}
\end{subequations}
We now show that $\alpha (\sum_{k=0}^K (\indkj-\indkm)\ttheta_k)^{\top} \xx > 0$ through the consideration of the following three cases:
\begin{itemize}
    \item \textbf{Case 1}: $\indkj=\indkm$. In this case, the subspace $\ttheta_k$ is a necessary condition for belonging to both regions $j$ and $m$. Thus, the summand is zero.
    \item \textbf{Case 2}: $\indkj=1$, but $\indkm=-1$. In this case, $\indkj-\indkm=2$. As $\xx \in R_j$, we know that ${\ttheta_k}^{\top} \xx > 0$, and the summand is positive.
    \item \textbf{Case 3}: $\indkj=-1$, but $\indkm=1$. In this case, $\indkj-\indkm=-2$. As $\xx \in R_j$, we know that ${\ttheta_k}^{\top} \xx < 0$, and the summand is, again, positive.
\end{itemize}
Since $j \neq m$, a summand corresponding to cases 2 and 3 must occur. Thus, the sum is strictly positive.
It follows that $\lim_{\alpha \rightarrow \infty} r(\alpha) = 1$.
Finally, for $\YHAT$ defined as in \cref{eq:multi-yhat}, we have $p(\YHAT = j \mid \xx \in R_j) = 1$ for $\alpha$ large.
Now, because all points in each $R_j$ have a distinct label from $\zset$, it is trivial to construct a binary log-linear model that places arbitrarily high probability on $R_j$'s label, which gives us $\Ivyz > 1-\eta$ for all $\eta>0$ small.
This completes the proof.

\end{proof}

This construction demonstrates that one should be cautious when arguing about the implications of log-linear guardedness when multiclass softmax classifiers are applied over the guarded representations. When log-linear guardedness with respect to a binary $\Z$ is imposed, there may still exist a set of $k>2$ linear separators that separate $\Z$.\looseness=-1

\section{Accuracy-Based Guardedness}
We now define an accuracy-based notion of guardedness and discuss its implications.
Note that the information-theoretic notion of guardedness described above does not directly imply that the accuracy of the log-linear model is damaged.
To see this, consider a binary log-linear model on balanced data that always assigns a probability of $\frac{1}{2} + \Delta$ to the correct label and $\frac{1}{2} - \Delta$ to the incorrect label.
For small enough $\Delta$, the cross-entropy loss of such a classifier will be arbitrarily close to the entropy $\log(2)$, even though it has perfect accuracy.
This disparity motivates an accuracy-based notion of guardedness.\looseness=-1

We first define the \defn{accuracy function} $\loss$ as
\begin{align}
\loss(q, \xx, z) = \begin{cases}
1, & \textbf{if  } \underset{z' \in \zset}{\,\argmax}\,q(z' \mid \xx) = z \\
0, & \textbf{else}
\end{cases}
\end{align}
The \defn{conditional $\Vfam$-accuracy} is defined as
\begin{equation}\label{eq:v-conditional-accuracy}
    \CAv\left(\Z \mid \X\right) \defeq \sup_{q \in \Vfam} \E{(z, \xx) \sim p(\Z, \X)} \ell(q, \xx, z)
\end{equation}
The \defn{$\Vfam\text{-accuracy}$} is a special case of \Cref{eq:v-conditional-accuracy} when no random variable is conditioned on\looseness=-1
\begin{equation}
    \CAv\left(\Z\right)\defeq \sup_{q \in \Vfam} \E{z \sim p(\Z)} \ell(q, z)
\end{equation}
where we have overloaded $\ell$ to take only two arguments.
We can now define an analogue of \citeposs{xu2020theory} $\Vfam$-information for accuracy as the difference between the unconditional $\Vfam$-accuracy and the conditional $\Vfam$-accuracy\footnote{Note the order of the terms is reversed in $\Vfam$-accuracy.\looseness=-1}
\begin{equation}
    \Iacc(\X \rightarrow \Z) \defeq \CAv\left(\Z \mid \X\right) - \CAv\left(\Z\right)
    \label{def:v-acc}
\end{equation}
Note that the $\Vfam$-accuracy is bounded below by 0 and above by $\frac{1}{2}$.

\begin{definition}[Accuracy-based $\Vfam$-guardedness]
Let $\X$ be a representation-valued random variable and let $\Z$ be an attribute-valued random variable. 
Moreover, let $\Vfam$ be a predictive family. 
A guarding function $h$ $\varepsilon$-guards $\X$ against $\Z$ with respect to $\Vfam$
if
$\Iacc(h(\X) \rightarrow \Z) < \varepsilon$.\looseness=-1
\end{definition}

\begin{definition}[Accuracy-based Empirical $\Vfam$-guardedness]
Let $\dataset = \{ (\xxn, \zzn) \}_{n=1}^N$ where $(\xxn, \zzn) \sim p(\X, \Z)$.
Let $\Xemp$ and $\Zemp$ be random variables over $\R^D$ and $\zset$, respectively, whose distribution corresponds to the marginals of the empirical distribution over $\dataset$. 
A guarding function $h$ empirically $\varepsilon$-guards $\dataset$ with respect to $\Vfam$
if $\Iacc(h(\Xemp) \rightarrow \Zemp) < \varepsilon$.\looseness=-1
\label{def:acc-guardedness}
\end{definition}

When focusing on accuracy in predicting $\Z$, it is natural to consider the \textbf{independence} (also known as \textbf{demographic parity}) \cite{feldman2015certifying} of the downstream classifiers that are trained over the representations.

\begin{definition}
The $L_1$ \defn{independence gap} measures the difference between the distribution of the model's predictions on the examples for which $\Z=\bot$, and the examples for which $\Z=\top$.
It is formally defined as
\begin{align}
&\text{GAP}_{\text{ind}}(\rv{\widehat{Y}} \to \rv{Z} \mid \X)\label{def:ind} \\
&\quad\defeq \sum_{y \in \yset} \Big| \underset{{\xx \sim p(\X \mid \Z = \bot)}}{\mathbb{E}}  p(\YHAT=y \mid \Z=\bot, \X=\xx)  \nonumber  \\
&\quad\quad\quad - \underset{{\xx \sim p(\X \mid \Z = \top)}}{\mathbb{E}}  p(\YHAT=y \mid \Z=\top, \X=\xx) \Big| \nonumber 
\end{align}
where $p(\X \mid \Z)$ is the conditional distribution over representations given the protected attribute.
\end{definition}

In \Cref{lemma-indp}, we prove that if the data is linearly $\varepsilon$-guarded and \defn{globally balanced} with respect to $\Z$, i.e., if $p(\Z=\bot)=p(\Z= \top) = \frac{1}{2}$, then the prediction of any linear binary downstream classifier is $4\varepsilon$ independent of $\Z$. Note that is true \emph{regardless} of any imbalance of the protected attribute $\Z$ within each class $y \in \yset$ in the downstream task: the data only needs to be globally balanced.
\begin{restatable}{proposition}{independence}
Let $\Vfam$ be the family of binary log-linear models, and assume that $p(\X, \Z)$ is globally balanced, i.e., $p(\Z=\bot)=p(\Z=\top)=\frac{1}{2}$.
Furthermore, let $h$ be a guarding function that $\varepsilon$-guards $\X$ against $\Z$ with respect to $\Vfam$ in terms of accuracy (\cref{def:acc-guardedness}), i.e.,
$\Iacc(h(\X) \rightarrow \Z) < \varepsilon$.
Let $\YHAT$ be defined as in \cref{eq:binary-yhat}.
Then, the $L_1$ independence gap (\cref{def:ind}) satisfies $\gapindh \leq 4 \varepsilon$.
\label{lemma-indp}
\end{restatable}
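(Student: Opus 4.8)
The plan is to reduce the independence gap to a single scalar quantity and then bound that scalar by reinterpreting the downstream prediction $\YHAT$ itself as a log-linear predictor of $\Z$. First I would simplify \cref{def:ind}: since $\YHAT = t(h(\X))$ is a deterministic function of $h(\X)$, the quantity $p(\YHAT = y \mid \Z = z, \X = \xx)$ equals $1$ if $t(h(\xx)) = y$ and $0$ otherwise, independent of $z$, so taking the expectation over $\xx \sim p(\X \mid \Z = z)$ collapses each inner term to $p(\YHAT = y \mid \Z = z)$. Because $\yset = \{0,1\}$ and $p(\YHAT = 0 \mid \Z = z) = 1 - p(\YHAT = 1 \mid \Z = z)$, the $y = 0$ and $y = 1$ summands are equal, hence $\gapindh = 2\,| p(\YHAT = 1 \mid \Z = \top) - p(\YHAT = 1 \mid \Z = \bot) |$. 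It therefore suffices to prove this difference has absolute value at most $2\varepsilon$.

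Second, I would exploit that the event $\{\YHAT = 1\}$ is exactly the halfspace $\{\, \xx : \vtheta^\top h(\xx) + \phi > 0 \,\}$, with $\vtheta, \phi$ as in \cref{eq:binary-yhat}. Consequently the decision rule ``predict $\Z = \top$ iff $\YHAT = 1$'' is the $\argmax$ of the binary log-linear model $q(\top \mid h(\xx)) = \sigma(\vtheta^\top h(\xx) + \phi)$, and ``predict $\Z = \top$ iff $\YHAT = 0$'' is the $\argmax$ of $q'(\top \mid h(\xx)) = \sigma(-\vtheta^\top h(\xx) - \phi)$; both $q$ and $q'$ belong to $\Vfam$. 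Writing $a$ and $a'$ for the accuracies of $q$ and $q'$ in predicting $\Z$ from $h(\X)$ under $\ell$, every input is classified correctly by exactly one of the two complementary rules, so $a + a' = 1$, and by definition of the conditional $\Vfam$-accuracy we get $\CAv(\Z \mid h(\X)) \geq \max(a, a')$.

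Third, I would combine this with the guardedness hypothesis and global balance. Since $p(\Z)$ is uniform, the best label-agnostic classifier attains accuracy $\tfrac12$, so $\CAv(\Z) = \tfrac12$; then $\Iacc(h(\X) \to \Z) < \varepsilon$ forces $\max(a, a') \leq \CAv(\Z \mid h(\X)) < \tfrac12 + \varepsilon$, whence $\min(a,a') = 1 - \max(a,a') > \tfrac12 - \varepsilon$ and $|a - a'| < 2\varepsilon$. A one-line computation, again using $p(\Z = \top) = p(\Z = \bot) = \tfrac12$ so that the cross terms cancel, gives $a - a' = p(\YHAT = 1 \mid \Z = \top) - p(\YHAT = 1 \mid \Z = \bot)$. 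Substituting into the expression for $\gapindh$ from the first step yields $\gapindh = 2|a - a'| < 4\varepsilon$, which implies the stated $\gapindh \leq 4\varepsilon$.

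The individual steps are routine; the one place that warrants care is the middle step — verifying that both complementary threshold rules genuinely lie in the log-linear family $\Vfam$ (so that $\CAv(\Z \mid h(\X))$ really dominates $\max(a, a')$) and that the $\argmax$ tie-breaking on the measure-zero set $\vtheta^\top h(\xx) + \phi = 0$ cannot inflate the bound. The two separate uses of global balance — once to pin $\CAv(\Z) = \tfrac12$ and once to cancel the cross terms in $a - a'$ — are precisely what produces the constant $4$, so I would track them explicitly rather than fold them into an inequality.
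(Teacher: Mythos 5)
Your proof is correct and takes essentially the same route as the paper's: both reinterpret the complementary decision rules induced by $\YHAT$ as binary log-linear predictors of $\Z$, use guardedness plus global balance to cap their accuracy at $\tfrac{1}{2}+\varepsilon$, and translate that back into the $L_1$ gap. The only difference is presentational — you argue directly via $a+a'=1$ where the paper argues by contradiction with a two-case split — and your treatment of the tie-breaking set matches the paper's implicit measure-zero assumption.
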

\begin{proof}
See \cref{app:acc-guardedness-ind} for the proof.
\end{proof}

\section{Experimental Evaluation}
\label{sec:experiments}
In the empirical portion of our paper, we evaluate the extent to which our theory holds in practice.\looseness=-1

\paragraph{Data.} 
We perform experiments on gender bias mitigation on the Bias in Bios dataset \citep{DBLP:journals/corr/abs-1901-09451}, which is composed of short biographies annotated by both gender and profession.
We represent each biography with the \CLS\ representation in the final hidden representation of pre-trained BERT, which creates our representation random variable $\X$.
We then try to guard against the protected attribute gender, which constitutes $\Z$.\looseness=-1

\paragraph{Approximating log-linear guardedness.} 
To approximate the condition of log-linear guardedness, we use RLACE \cite{ravfogel2022linear}. 
The method is based on a minimax game between a log-linear predictor that aims to predict the concept of interest from the representation and an orthogonal projection matrix that aims to guard the representation against prediction.
The process results in an orthogonal projection matrix $P \in \R^{D \times D}$, which, empirically, prevents log-linear models from predicting gender after the linear transformation $P$ is applied to the representations.
This process constitutes our guarding function $\hrlace$.
Our theoretical result (\cref{claim:binary-y-v-info}) only holds for $\delta$-discretized log-linear models. 
RLACE, however, guards against conventional log-linear models.
Thus, we apply $\delta$-discretization post hoc, i.e., after training.

\subsection{Quantifying Empirical Guardedness}

We test whether our theoretical analysis of leakage through binary and multiclass downstream classifiers holds in practice, on held-out data. 
Profession prediction serves as our downstream prediction task (i.e., our $\YHAT$), and we study binary and multiclass variants of this task. In both cases, we measure three $\Vfam$-information estimates: 
\looseness=-1
\begin{itemize}
    \item \textbf{Evaluating $\Ivxz$}. 
    To compute an empirical upper bound on information about the protected attribute which is linearly extractable from the representations, we train a log-linear model to predict $\zn$ from $\xxn$, i.e., from the unguarded representations. In other words, this is an upper bound on the information that could be leaked through the downstream classifier $\YHAT$.
    
    \item \textbf{Evaluating $\Ivyzprof$}.
    We quantify leakage through a downstream classifier $\YHAT$ by estimating $\Ivyz$, for binary and multiclass $\YHAT$, via two different approaches.
    The first of these, denoted $\Ivyzprof$, is computed by training two log-linear and stiching them together into a pipeline. 
    First, we fit a log-linear model on top of the guarded representations $\hrlace(\X)$ to yield predictions for a downstream task $\Yhatprof = t(\hrlace(\X))$ where $t : \R^D \rightarrow \yset$ is the function induced by the trained classifier. 
    Subsequently, we train a second log-linear model $\ZHAT = r(\Yhatprof)$ with $r: \yset \rightarrow \{0, 1\}$ to predict $\Z$ from the output of $\YHAT$
    In words, $\Yhatprof$ represents the argmax from the distribution of profession labels (binary or multiclass).
    We approximate the $\Vfam$-information $\Yhatprof$ leaks about $\Z$ through the cross-entropy loss of a second classifier trained to predict the protected attribute from $\Yhatprof$, i.e., we compute empirical guardedness (\cref{def:information-theoretic-guardedness}) on held-out data.\looseness=-1

   \item \textbf{Evaluating $\Ivyzadv$}.
    In addition to the standard scenario estimated by $\Ivyzprof$, we also ask: What is the maximum amount of information that a downstream classifier could leak about gender?
    $\Ivyzadv$ estimates this quantity, with a variant of the setup of $\Ivyzprof$. 
    Namely, instead of training the two log-linear models separately, we train them together to find the $\Yhatadv$ that is adversarially chosen to predict gender well.
   However, the argmax operation is not differentiable, so we remove it during training.

   In practice, this means $\Yhatadv$ does not predict profession, but instead predicts a latent distribution which is adversarially chosen so as to best enable the prediction of gender.\footnote{Only at inference time do we apply the argmax over the first log-linear model to get a prediction $\Yhatadv = y$. We find that the loss of the composition model is not increased by the argmax operation.}
\end{itemize}

While high $\Ivyzadv$ indicates that there exists an adversarial log-linear model that leaks information about $\Z$, it does not necessarily mean that common classifiers like those used to compute $\Ivyzprof$ would leak that information. 
 Across all of 3 conditions, we explore how different values of the thresholds $\delta$ (applied after training) affect the $\Vfam$-information.
Refer to \cref{app:experiments} for comprehensive details regarding our experimental setup.\looseness=-1

\subsection{Binary $\Z$ and $\Y$}
\label{sec:binary-experiment}

We start by evaluating the case where both $\Z$ and $\Y$ take only two values each.

\paragraph{Experimental Setting.}
To create our set $\yset$ for the binary classification task, we randomly sample 15 pairs of professions from the Bias in Bios dataset; see \cref{app:experiments}.
We train a binary log-linear model to predict each profession from the representation after the application of the RLACE guarding function, $\hrlace(\xx_n)$.
Empirically, we observe that our log-linear models achieve no more than $2\%$ above majority accuracy on the protected data.
For each pair of professions, we estimate three forms of $\Vfam$-information. 
%Across all of these, 

\begin{figure}
    \centering
    \includegraphics[width=1.0\columnwidth]{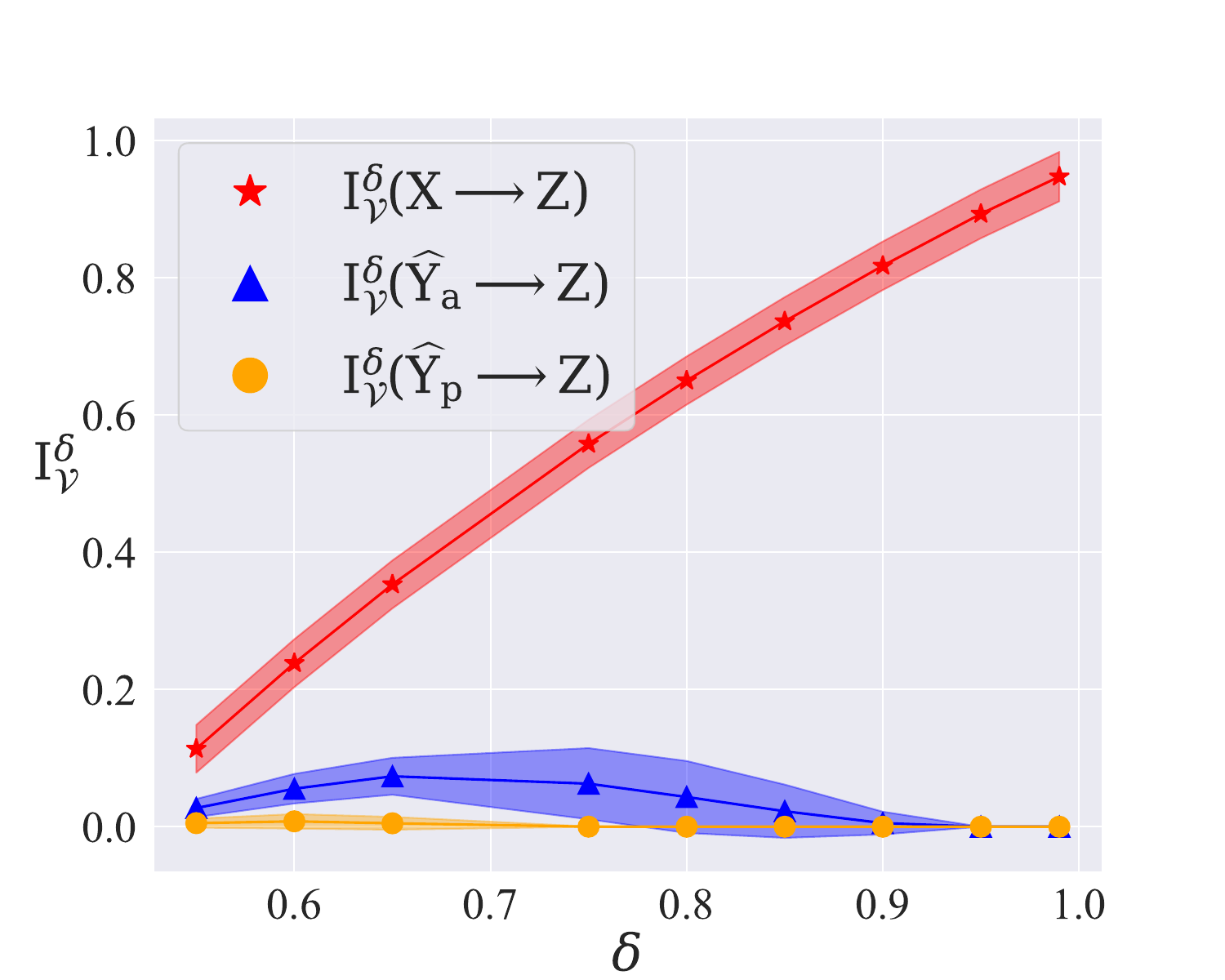}
    \caption{Results for \cref{sec:binary-experiment}. 
    Estimate of $\Vfam$-information between the protected attribute and (1) the original representations (\redtext{red}); (2) the labels induced by the inner model within a composition of two log-linear models, trained to adversarially recover gender (\bluetext{blue}); (3) labels for the downstream task (the predictions of profession classifiers; \orangetext{orange}). The curve is the mean over different pairs of professions, and the shaded area representations 1 standard deviation.
    The $x$-axis presents results for different values of the threshold $\delta$.
    Recall the threshholding is applied post hoc.}
    \label{fig:results-binary}
    \vspace{-.5cm}
\end{figure}

\paragraph{Results.} The results are presented in \cref{fig:results-binary}, for the 15 pairs of professions we experimented with (each curve is the mean over all pairs), the three quantities listed above, and different values of the threshold $\delta$ on the $x$-axis.
Unsurprisingly, we observe that the $\Vfam$-information estimated from the original representations (the \redtext{red} curve) has high values for some thresholds, indicating that BERT representations do encode gender distinctions. The \bluetext{blue} curve, corresponding to $\Ivyzadv$, measures the ability of the adversarially constructed binary downstream classifier to recover the gender information. 
It is lower than the \redtext{red} curve but is nonzero, indicating that the solution found by RLACE does not generalize perfectly.
Finally, the \orangetext{orange} curve, corresponding to $\Ivyzprof$, measures the amount of leakage we get in practice from downstream classifiers that are trained on profession prediction. In that case, the numbers are significantly lower, showing that RLACE does provide decent guardedness in practice.\looseness=-1

\subsection{Binary $\Z$, Multiclass $\Y$}
\label{sec:multi-experiment}
Empirically, we have shown that RLACE provides good, albeit imperfect, protection against binary log-linear model adversaries.
This finding is in line with the conclusions of \cref{claim:binary-y-v-info}. 
We now turn to experiments on multiclass classification, i.e., where $|\yset| > 2$.
According to \cref{sec:multiclass}, to the extent that the $K$-Voronoi assumption holds, we expect guardedness to be broken with a large enough $|\yset|$.\looseness=-1

\paragraph{Experimental Setting.}

Note that, since $|\yset| > 2$, $\YHAT$ is a multiclass log-linear classifier over $\yset$, but the logistic classifier that predicts gender from the argmax over these remains binary. 
We consider different values of $|\yset| =  2,4,8,16,32,64$.

\begin{figure}
    \centering
    \includegraphics[width=1.0\columnwidth]{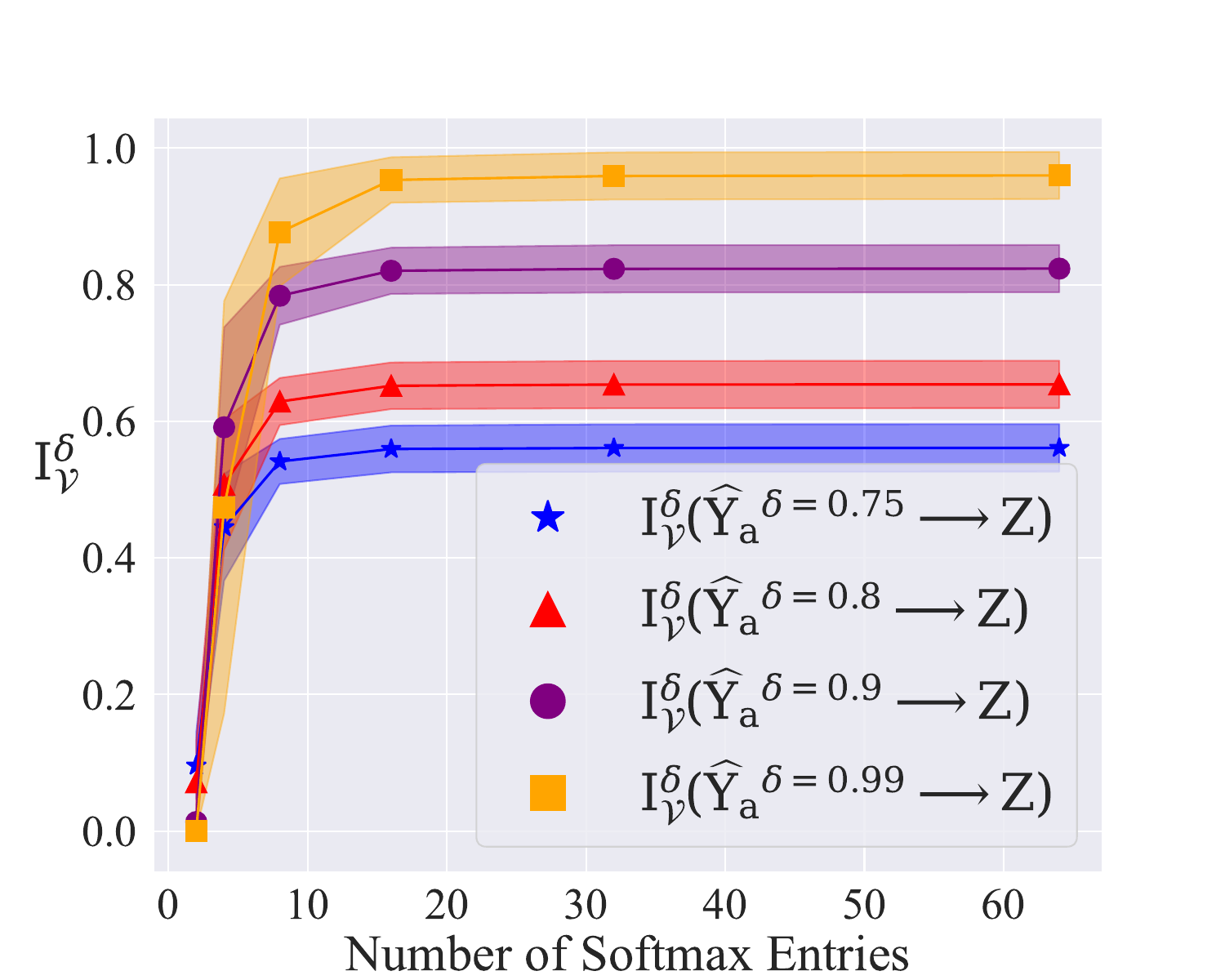}
    \caption{Results for \cref{sec:multi-experiment}.
     Estimate of $\Vfam$-information between the protected attribute and $\Yhatadv$ with various $\delta$.}
    \label{fig:results-multi}
    \vspace{-.8cm}
\end{figure}

\paragraph{Results.}
The results are shown in \cref{fig:results-multi}. For all professions, we find a log-linear model whose predicted labels are highly predictive of the protected attribute. 
Indeed, softmax classifiers with 4 to 8 entries (corresponding to hidden neurons in the network which is the composition of two log-linear models) perfectly recover the gender information.
This indicates that there are labeling schemes of the data using 4 or 8 labels that recover almost all information about $\Z$.

\subsection{Discussion}

Even if a set of representations is log-linearly guarded, one can still adversarially construct a multiclass softmax classifier that recovers the information. These results stem from the disparity between the manifold in which the concept resides, and the expressivity of the (linear) intervention we perform: softmax classifiers can access information that is inaccessible to a purely linear classifier. Thus, interventions that are aimed at achieving guardedness should consider the specific adversary against which one aims to protect. 

\section{Related Work}

Techniques for information removal are generally divided into adversarial methods and post-hoc linear methods. Adversarial methods \citep{edwards2015censoring,xie2017controllable, chen2018adversarial,elazar-goldberg-2018-adversarial, zhang2018mitigating} use a gradient-reversal layer during training to induce representations that do not encode the protected attribute. However, \citet{elazar-goldberg-2018-adversarial} have shown that these methods fail to exhaustively remove all the information associated with the protected attribute. Linear methods have been proposed as a tractable alternative, where one identifies a linear subspace that captures the concept of interest, and neutralizes it using algebraic techniques. Different methods have been proposed for the identification of the subspace, e.g., PCA and variants thereof \citep{bolukbasi2016man, kleindessner2023efficient}, orthogonal rotation \citep{dev2020oscar}, classification-based \citep{ravfogel-etal-2020-null}, spectral \citep{shao2023erasure, shao-etal-2023-gold} and adversarial approaches \citep{ravfogel2022linear}.

Different definitions have been proposed for fairness \citep{mehrabi2021survey}, but they are mostly extrinsic---they concern themselves only with the predictions of the model, and not with its representation space. Intrinsic bias measures, which focus on the representation space of the model, have been also extensively studied. These measures quantify, for instance, the extent to which the word representation space encodes gender distinctions \citep{bolukbasi2016man,caliskan2017semantics,kurita2019measuring,zhao2019gender}. The \emph{relation} between extrinsic and intrinsic bias measures is understudied, but recent works have demonstrated empirically either a relatively weak or inconsistent correlation between the two \citep{goldfarb2021intrinsic, orgad2022gender,cao2022intrinsic, orgadchoose, steed-etal-2022-upstream, shen-etal-2022-representational, cabello2023independence}.\looseness=-1

\section{Conclusion}

We have formulated the notion of guardedness as the inability to \emph{directly} predict a concept from the representation. We show that log-linear guardedness with respect to a binary protected attribute does not prevent a \emph{subsequent} multiclass linear classifier trained over the guarded representations from leaking information on the protected attribute. In contrast, when the main task is binary, we can bound that leakage. Altogether, our analysis suggests that the deployment of linear erasure methods should carefully take into account the manner in which the modified representations are being used later on, e.g., in classification tasks.  

\section*{Limitations}

Our theoretical analysis targets a specific notion of information leakage, and it is likely that it does not apply to alternative ones. While the $\Vfam$-information-based approach seems natural, future work should consider alternative extrinsic bias measures as well as alternative notions of guardedness. Additionally, our focus is on the linear case, which is tractable and important---but limits the generality of our conclusions. We hope to extend this analysis to other predictive families in future work.

\section*{Ethical Considerations}
The empirical experiments in this work involve the removal of binary gender information from a pre-trained representation. Beyond the fact that gender is a non-binary concept, this task may have real-world applications, in particular such that relate to fairness. We would thus like to remind the readers to take the results with a grain of salt and be extra careful when attempting to deploy methods such as the one discussed here. 
Regardless of any theoretical result, care should be taken to measure the effectiveness of bias mitigation efforts in the context in which they are to be deployed, considering, among other things, the exact data to be used and the exact fairness metrics under consideration.

\section*{Acknowledgements}
We thank Afra Amini, Clément Guerner, David Schneider-Joseph, Nora Belrose and Stella Biderman for their thoughtful comments and revision of this paper. This project received funding from the Europoean Research Council (ERC) under the Europoean Union's Horizon 2020 research and innovation program, grant agreement No. 802774 (iEXTRACT). Shauli Ravfogel is grateful to be supported by the Bloomberg Data Science Ph.D Fellowship.
Ryan Cotterell acknowledges the Google Research Scholar program for supporting the proposal  ``Controlling and Understanding Representations through Concept Erasure.''

% Entries for the entire Anthology, followed by custom entries
\bibliography{anthology,custom}
\bibliographystyle{acl_natbib}

\newpage
\clearpage

\appendix
\onecolumn

%To the best of our knowledge, this work is the first to define guardedness as an intrinsic bias measure, and formally prove its implications on extrinsic bias.

\section{Appendix}
\label{appendix}

\subsection{Composition of $\delta$-discretized binary log-linear models}
\label{app:composition}
\begin{restatable}{lemma}{composition}
Let $\Vfamdelta$ be the family of discretized binary log-linear models (\cref{def:discrete-logistic}).
Let $\tau(\vtheta^{\top} \xx + \phi)$ be a linear decision rule where $\tau$ is defined as in \Cref{eq:threshhold}, and, furthermore, assume $\vtheta^{\top} \xx + \phi \neq 0$ for all $\xx$.
Then, for any $\alpha, \beta \in \R$, there exists a function $r \in \Vfamdelta$ such that $r(0) = \rhodelta(\sigma(\alpha \cdot \tau(\vtheta^{\top} \xx + \phi) + \beta))$ where $\rhodelta$ is defined as in \Cref{def:discrete-logistic}.\looseness=-1
\label{lemma:composition}
\end{restatable}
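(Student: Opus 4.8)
The plan is to produce the function $r \in \Vfamdelta$ explicitly by constructing its parameter vector $\aalpha' \in \R^D$ and bias $\gamma' \in \R$ so that the composed rule $\rhodelta(\sigma(\alpha \cdot \tau(\vtheta^{\top}\xx + \phi) + \beta))$ is exactly reproduced. The key observation is that $\tau(\vtheta^{\top}\xx + \phi)$ takes only two values, $0$ and $1$, so the expression $\alpha \cdot \tau(\vtheta^{\top}\xx + \phi) + \beta$ is a step function of $\xx$ with exactly two levels: it equals $\beta$ on the halfspace $\{\xx : \vtheta^{\top}\xx + \phi < 0\}$ and $\alpha + \beta$ on the complementary halfspace $\{\xx : \vtheta^{\top}\xx + \phi > 0\}$ (the boundary having been excluded by hypothesis). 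Since $\sigma$ is monotonic and $\rhodelta$ only depends on whether its argument is $\geq \tfrac12$, which in turn depends only on whether the pre-sigmoid argument is $\geq 0$, the composed function $r$ is itself a two-valued step function of the sign of $\vtheta^{\top}\xx + \phi$, taking value $\delta$ on one halfspace and $1-\delta$ on the other (or being constant, if $\beta$ and $\alpha+\beta$ have the same sign).

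The main steps, in order: (i) enumerate the two cases for the sign of $\vtheta^{\top}\xx + \phi$ and record the two possible values of $\sigma(\alpha\cdot\tau(\cdots)+\beta)$, namely $\sigma(\beta)$ and $\sigma(\alpha+\beta)$; (ii) apply $\rhodelta$ to each, obtaining a value in $\{\delta, 1-\delta\}$ determined by $\mathrm{sign}(\beta)$ and $\mathrm{sign}(\alpha+\beta)$ respectively; (iii) observe that any function from $\R^D$ to $\{\delta,1-\delta\}$ that is constant on each side of the hyperplane $\vtheta^{\top}\xx+\phi=0$ is realizable in $\Vfamdelta$ — concretely, take $\aalpha' = c\,\vtheta$ and $\gamma' = c\,\phi$ for a suitable sign $c \in \{+1,-1\}$ (and $c$ with large magnitude is not even needed, since $\rhodelta$ already discretizes; the sign of $c$ flips which halfspace gets $\delta$ versus $1-\delta$, and the constant case is handled by noting a constant function $\rhodelta(\sigma(\gamma'))$ is trivially in the family). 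This matches $r(0)$ to the target; $r(1) = \rhodelta(1-\sigma(\aalpha'^{\top}\xx+\gamma'))$ is then automatically the complementary value, consistent with the target's $f(1) = \rhodelta(1-\sigma(\cdots))$ form.

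The only genuine subtlety — and the step I would be most careful about — is the boundary/tie-breaking behavior: one must use the hypothesis $\vtheta^{\top}\xx+\phi \neq 0$ to guarantee $\tau$ is well-defined pointwise, and one must check that the edge cases where $\sigma(\beta) = \tfrac12$ or $\sigma(\alpha+\beta)=\tfrac12$ (i.e. $\beta = 0$ or $\alpha+\beta=0$) are absorbed by the "ties broken arbitrarily" clause in \Cref{def:discrete-logistic}, so that $r$ remains a valid member of $\Vfamdelta$. Beyond that, the argument is a finite case check and the explicit choice of $(\aalpha',\gamma')$; no limiting or approximation argument is required. This lemma is exactly what is invoked in the proof of \Cref{claim:binary-y-v-info} to conclude that the classifier $r(z \mid h(\xx)) = q^*(z \mid \tau(\vtheta^{\top}h(\xx)+\phi))$ lies in $\Vfamdelta$.
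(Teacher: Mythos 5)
Your proposal is correct and follows essentially the same route as the paper's proof: identify $\sigma(\alpha\cdot\tau(\vtheta^{\top}\xx+\phi)+\beta)$ as a two-level step function with values $\sigma(\beta)$ and $\sigma(\alpha+\beta)$ on the two halfspaces, then realize the resulting $\{\delta,1-\delta\}$-valued rule inside $\Vfamdelta$ by reusing $\vtheta,\phi$ up to sign (the paper's Cases 1--2) or by a constant classifier when both levels fall on the same side of $\tfrac12$ (the paper's Cases 3--4, your constant case). Your explicit attention to the tie-breaking at $\sigma(\beta)=\tfrac12$ or $\sigma(\alpha+\beta)=\tfrac12$ is a point the paper's write-up leaves implicit.
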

\begin{proof}
Consider the function composition $ \sigma(\alpha \cdot \tau(\vtheta^{\top} \xx + \phi) + \beta)$.
First, note that  $\tau(\vtheta^{\top} \xx + \phi)$ is a step-function.
And, thus, so, too, is $\sigma(\alpha \cdot \tau(\vtheta^{\top} \xx + \phi)+ \beta)$, i.e.,
\begin{equation}
     \yhat(\xx) \defeq \sigma(\alpha \cdot \tau(\vtheta^{\top} \xx + \phi) + \beta) =  
\begin{cases}
   \frac{1}{1+e^{-\beta}} \defeq a,& \textbf{if } \vtheta^{\top} \xx + \phi \leq 0 \\
    \frac{1}{1+e^{-\alpha - \beta}} \defeq b,              & \textbf{else}
\end{cases} 
 \end{equation}
This results in a classifier with the following properties, if $\vtheta^{\top} \xx + \phi \leq 0$, we have
\begin{align}
p(\YHAT = 0 \mid \X = \xx) &= a \\
p(\YHAT = 1 \mid \X = \xx) &= 1 - a
\end{align}
Otherwise, if $\vtheta^{\top} \xx + \phi > 0$, we have
\begin{align}
p(\YHAT = 0 \mid \X = \xx) &= b \\
p(\YHAT = 1 \mid \X = \xx) &= 1-b
\end{align}
By assumption, we have $\vtheta^{\top} \xx + \phi \neq  0$.
Now, observe that a binary $\delta$-discretized classifier can represent any distribution of the form 
\begin{align}
r(0) &= \begin{cases}
    \delta & \textbf{if } \vtheta^{\top}\xx + \phi > 0 \\
    1-\delta & \textbf{else}
\end{cases}\\
r(1) &= \begin{cases}
   1- \delta & \textbf{if } \vtheta^{\top}\xx + \phi > 0 \\
    \delta & \textbf{else}
\end{cases}
\end{align}
We show how to represent $r$ as a $\delta$-discretized binary log-linear model in four cases:
\begin{itemize}
    \item \textbf{Case 1}: $a > \frac{1}{2}$ and $b < \frac{1}{2}$. 
    In this case, we require a classifier that places probability $\delta$ on 0 if $\vtheta^{\top} \xx + \phi < 0$ and probability $1-\delta$ on 0 if $\vtheta^{\top} \xx + \phi < 0$.

    \item \textbf{Case 2}: $a < \frac{1}{2}$ and $b > \frac{1}{2}$. 
        In this case, we require a classifier that places probability $\delta$ on 0 if $\vtheta^{\top} \xx + \phi < 0$ and probability $1-\delta$ on 0 if $\vtheta^{\top} \xx + \phi > 0$.
    \item \textbf{Case 3}: $a, b > \frac{1}{2}$.
    In this case, we set $\vtheta = \mathbf{0}$ and $\phi > 0$. 
    
    \item \textbf{Case 4}: $a, b < \frac{1}{2}$.
    In this case, we set $\vtheta = \mathbf{0}$ and $\phi < 0$. 
\end{itemize}
This proves the result.

\end{proof}

\subsection{Accuracy-based Guardedness: the Balanced Case}
\label{app:acc-guardedness-ind}

\independence*

\begin{proof}
In the following proof, we use the notation $\X=\xx$ for the guarded variable $h(\X)=h(\xx)$ to avoid notational cutter.
Assume, by way of contradiction, that the $L_1$ independence gap (\cref{def:ind}), $ \sum_{y}  \left | \underset{\xx \sim p(\X \mid \Z = \bot)}{\mathbb{E}} p(\YHAT=y \mid \Z=\bot,\X = \xx) - \underset{\xx \sim p(\X \mid \Z = \top)}{\mathbb{E}} 
 p(\YHAT=y \mid \Z= \top,\X = \xx) \right | > 4 \varepsilon$.
Then, there exists a $y \in \yset$ such that 

\begin{equation}
    \left| \underset{h(\xx) \sim p(h(\X) \mid \Z=\bot)}{\mathbb{E}} p(\YHAT=y \mid \Z=\bot,\X = \xx)-\underset{\xx \sim p(\X \mid \Z=\top)}{\mathbb{E}} p(\YHAT=y \mid \Z=\top,\X = \xx) \right | > 2 \varepsilon.
\end{equation}

We will show that we can build a classifier $q^{\star} \in \Vfam$ that breaks the assumption $\Iacc(h(\X) \rightarrow \Z) < \varepsilon$. 
%denote by $\ZHAT_f$ the prediction of a linear classifier $f: \R^D \rightarrow \{\top, \bot \}$:
Next, we define the random variable $\ZHATq$ for convenience as
\begin{equation}
     \ZHATq(z) \defeq 
\begin{cases}
   1,& \textbf{if } z = \underset{z'}{\argmax}{\,\,q(z' \mid \xx)} \\
    0,              & \textbf{else}
\end{cases} 
\end{equation}
In words, $\ZHATq$ is a random variable that ranges over possible predictions, derived from the argmax, of the binary log-linear model $q$.
Now, consider the following two cases.
\begin{itemize}
\item \textbf{Case 1}: 
There exits a $y$ such that $\underset{\xx \sim p(\X \mid \Z=\bot)}{\mathbb{E}}  p(\YHAT=y \mid \Z=\bot,\X = \xx) -  \underset{\xx \sim p(\X \mid \Z)}{\mathbb{E}} p(\YHAT=y\mid \Z=\top,\X = \xx) >  2\varepsilon$. 
Let $\YHAT$ be defined as in \Cref{eq:binary-yhat}.
Next, consider a random variable $\ZHATr$ defined as follows
\begin{subequations}
\begin{align}
     p(\ZHATr = \bot \mid \YHAT = y) &\defeq 
\begin{cases}
   1,& \textbf{if}\,\, \YHAT = y \\
    0,              & \textbf{else}
\end{cases}  \\
p(\ZHATr = \top \mid \YHAT = y) &\defeq 
\begin{cases}
   1,& \textbf{if}\,\, \YHAT \neq y\\
    0,              & \textbf{else}
\end{cases} 
\end{align}
\end{subequations}
Now, note that we have
\begin{subequations}
\begin{align}
p(\ZHATr = \bot \mid\X = \xx) &= \sum_{y \in \yset} p(\ZHATr = \bot \mid \YHAT = y) p(\YHAT = y \mid\X = \xx) \\
&= p(\YHAT = y \mid\X = \xx)
\end{align}
\end{subequations}
and 
\begin{subequations}
\begin{align}
p(\ZHATr = \top \mid\X = \xx) &= \sum_{y \in \yset} p(\ZHATr = \top \mid \YHAT = y) p(\YHAT = y \mid\X = \xx) \\
&= p(\YHAT \neq y \mid\X = \xx)
\end{align}
\end{subequations}
We perform the algebra below where the step from \cref{eq:hard-step1} to \cref{eq:hard-step2} follows because of the fact that, despite the nuisance variable $\YHAT$, the decision boundary of $p(\ZHATr = \top \mid\X = \xx)$ is linear and, thus, there exists a binary log-linear model in $\Vfam$ which realizes it.
Now, consider the following steps
% HERE
\begin{subequations}
 \begin{align}
\CAv&\left(\Z \mid \X\right)\defeq \sup_{q \in \Vfam} \E{(\xx,z) \sim p(\Z, \X)} \ell(\q, \xx, z) \\
 &= \sup_{q \in \Vfam}  \underset{\xx \sim p(\X \mid \Z)}{\mathbb{E}} \E{z \sim p(\Z)} \ell(\q, \xx, z) \\
% &= \E{(\xx,z) \sim p(\X, \Z)} \ell(q^*, \xx, z) \\
  % &= \E{(\xx,z) \sim p(\X, \Z)}
 % \ell(q^*, \xx, z) \\
&= \sup_{q \in \Vfam} \underset{\xx \sim p(\X \mid \Z =\bot)}{\mathbb{E}} p(\ZHATq = \bot \mid \Z = \bot,\X = \xx )p(\Z = \bot) \label{eq:hard-step1}\\
& \quad\quad\quad\quad  + \underset{\xx \sim p(\X \mid \Z =\top)}{\mathbb{E}} p(\ZHATq = \top \mid \Z = \top,\X = \xx )p(\Z = \top) \nonumber \\
     &\geq  \underset{\xx \sim p(\X \mid \Z = \bot)}{\mathbb{E}} p( \ZHATr = \bot \mid \Z = \bot,\X = \xx)p(\Z = \bot) \label{eq:hard-step2} \\
     &\quad\quad\quad\quad +  \underset{\xx \sim p(\X \mid \Z = \top)}{\mathbb{E}} p(\ZHATr = \top \mid \Z = \top,\X = \xx )p(\Z = \top) \nonumber  \\
    &=  \underset{\xx \sim p(\X \mid \Z = \bot)}{\mathbb{E}} p( \YHAT = y \mid \Z = \bot,\X = \xx )p(\Z = \bot) \\ 
    & \quad\quad\quad\quad + \underset{\xx \sim p(\X \mid \Z = \top)}{\mathbb{E}} p\left( \YHAT \neq y \mid \Z = \top,\X = \xx \right)p(\Z = \top) \\
        &= \underset{\xx \sim p(\X \mid \Z = \bot)}{\mathbb{E}}p(\YHAT=y \mid \Z=\bot,\X = \xx)p(\Z=\bot) \\
      &  \quad\quad\quad\quad +  \underset{\xx \sim p(\X \mid \Z = \top)}{\mathbb{E}} (1-p(\YHAT=y\mid \Z=\top,\X = \xx))p(\Z=\top) \\
        &=  \underset{\xx \sim p(\X \mid \Z=\bot)}{\mathbb{E}}\frac{1}{2}p(\YHAT=y \mid \Z=\bot,\X = \xx) \\ 
       & \quad\quad\quad\quad + \underset{\xx \sim p(\X \mid \Z = \top)}{\mathbb{E}}\frac{1}{2}(1-p(\YHAT=y\mid \Z=\top,\X = \xx)) \\
    &=  \underset{\xx \sim p(\X \mid \Z=\bot)}{\mathbb{E}}\frac{1}{2}p(\YHAT=y \mid \Z=\bot,\X = \xx) \\ 
    & \quad\quad\quad\quad +  \underset{\xx \sim p(\X \mid \Z=\top)}{\mathbb{E}} \frac{1}{2}-\frac{1}{2}p(\YHAT=y\mid \Z=\top,\X = \xx) \\
        &=  \frac{1}{2}\underbrace{\underset{\xx \sim p(\X \mid \Z=\bot)}{\mathbb{E}}p(\YHAT=y \mid \Z=\bot,\X = \xx) -\!\!\!\! \underset{\xx \sim p(\X \mid \Z=\top)}{\mathbb{E}} p(\YHAT=y\mid \Z=\top,\X = \xx)}_{> 2\varepsilon \text{ by assumption}}  + \frac{1}{2} \\
            &> \frac{1}{2}\left(2 \varepsilon\right)+ \frac{1}{2} \\
            &= \frac{1}{2} + \varepsilon
\end{align}
\end{subequations}

\item \textbf{Case 2}: There exits a $y$ such that 

\begin{equation}
    \underset{\xx \sim p(\X \mid \Z =\top)}{\mathbb{E}} p(\YHAT=y \mid \Z=\top,\X = \xx) -   \underset{\xx \sim p(\X \mid \Z =\bot)}{\mathbb{E}} p(\YHAT=y\mid \Z=\bot,\X = \xx) >  2\varepsilon 
\end{equation}

Let $\YHAT$ be defined as in \Cref{eq:binary-yhat}.
Next, consider a random variable defined as follows
\begin{subequations}
\begin{align}
     p(\ZHATr = \bot \mid \YHAT = y) &\defeq 
\begin{cases}
   1,& \textbf{if}\,\, \YHAT \neq y \\
    0,              & \textbf{else}
\end{cases}  \\
p(\ZHATr = \top \mid \YHAT = y) &\defeq 
\begin{cases}
   1,& \textbf{if}\,\, \YHAT = y\\
    0,              & \textbf{else}
\end{cases} 
\end{align}
\end{subequations}

Now, note that we have
\begin{subequations}
\begin{align}
p(\ZHATr = \bot \mid\X = \xx) &= \sum_{y \in \yset} p(\ZHATr = \bot \mid \YHAT = y) p(\YHAT = y \mid\X = \xx) \\
&= p(\YHAT \neq y \mid\X = \xx)
\end{align}
\end{subequations}
and 
\begin{subequations}
\begin{align}
p(\ZHATr = \top \mid\X = \xx) &= \sum_{y \in \yset} p(\ZHATr = \top \mid \YHAT = y) p(\YHAT = y \mid\X = \xx) \\
&= p(\YHAT = y \mid\X = \xx)
\end{align}
\end{subequations}
We proceed by algebraic manipulation
\begin{subequations}
 \begin{align}
\CAv&\left(\Z \mid \X\right)\defeq \sup_{q \in \Vfam} \E{(\xx,z) \sim p(\Z, \X)} \ell(\q, \xx, z) \\
 &= \sup_{q \in \Vfam}  \underset{\xx \sim p(\X \mid \Z)}{\mathbb{E}} \E{z \sim p(\Z)} \ell(\q, \xx, z) \\
% &= \E{(\xx,z) \sim p(\X, \Z)} \ell(q^*, \xx, z) \\
  % &= \E{(\xx,z) \sim p(\X, \Z)}
 % \ell(q^*, \xx, z) \\
&= \sup_{q \in \Vfam} \underset{\xx \sim p(\X \mid \Z =\bot)}{\mathbb{E}} p(\ZHATq = \bot \mid \Z = \bot,\X = \xx )p(\Z = \bot) \label{eq:hard-step1}\\
& \quad\quad\quad\quad  + \underset{\xx \sim p(\X \mid \Z =\top)}{\mathbb{E}} p(\ZHATq = \top \mid \Z = \top,\X = \xx )p(\Z = \top) \nonumber \\
     &\geq  \underset{\xx \sim p(\X \mid \Z = \bot)}{\mathbb{E}} p( \ZHATr = \bot \mid \Z = \bot,\X = \xx)p(\Z = \bot) \label{eq:hard-step2} \\
     &\quad\quad\quad\quad +  \underset{\xx \sim p(\X \mid \Z = \top)}{\mathbb{E}} p(\ZHATr = \top \mid \Z = \top,\X = \xx )p(\Z = \top) \nonumber  \\
    &=  \underset{\xx \sim p(\X \mid \Z = \bot)}{\mathbb{E}} p( \YHAT \neq y \mid \Z = \bot,\X = \xx )p(\Z = \bot) \\ 
    & \quad\quad\quad\quad + \underset{\xx \sim p(\X \mid \Z = \top)}{\mathbb{E}} p\left( \YHAT = y \mid \Z = \top,\X = \xx \right)p(\Z = \top) \\
            &= \underset{\xx \sim p(\X \mid \Z = \bot)}{\mathbb{E}}(1-p(\YHAT=y \mid \Z=\bot,\X = \xx))p(\Z=\bot) \\
      &  \quad\quad\quad\quad +  \underset{\xx \sim p(\X \mid \Z = \top)}{\mathbb{E}} p(\YHAT=y\mid \Z=\top,\X = \xx)p(\Z=\top) \\
        &=  \underset{\xx \sim p(\X \mid \Z=\top)}{\mathbb{E}}\frac{1}{2}p(\YHAT=y \mid \Z=\top,\X = \xx) \\ 
       & \quad\quad\quad\quad + \underset{\xx \sim p(\X \mid \Z = \bot)}{\mathbb{E}}\frac{1}{2}(1-p(\YHAT=y\mid \Z=\bot,\X = \xx)) \\
    &=  \underset{\xx \sim p(\X \mid \Z=\top)}{\mathbb{E}}\frac{1}{2}p(\YHAT=y \mid \Z=\top,\X = \xx) \\ 
    & \quad\quad\quad\quad +  \underset{\xx \sim p(\X \mid \Z=\bot)}{\mathbb{E}} \frac{1}{2}-\frac{1}{2}p(\YHAT=y\mid \Z=\bot,\X = \xx) \\
        &=  \frac{1}{2}\underbrace{\underset{\xx \sim p(\X \mid \Z=\top)}{\mathbb{E}}p(\YHAT=y \mid \Z=\top,\X = \xx) -\!\!\!\! \underset{\xx \sim p(\X \mid \Z=\bot)}{\mathbb{E}} p(\YHAT=y\mid \Z=\bot,\X = \xx)}_{> 2\varepsilon \text{ by assumption}}  + \frac{1}{2} \\
            &> \frac{1}{2}\left(2 \varepsilon\right)+ \frac{1}{2} \\
            &= \frac{1}{2} + \varepsilon
\end{align}
\end{subequations}

\end{itemize}
In both cases, we have $\CAv\left(\Z \mid\X = \xx \right) > \frac{1}{2} + \varepsilon$.
Thus, $\mathbb{E}_{\xx \sim p(\X)} \CAv\left(\Z \mid\X = \xx \right) = \CAv\left(\Z \mid h(\X) \right) \geq \frac{1}{2} + \varepsilon$.
Note that the distribution $p(\Z, \X)$ is globally balanced, we have $\CAv\left(\Z\right)=\frac{1}{2}$. 
Thus, 
\begin{subequations}
\begin{align}
\Iacc(h(\X) \rightarrow \Z) 
&=\CAv\left(\Z\right) -  \CAv\left(\Z \mid h(\X)\right) \\
&=\CAv\left(\Z \mid h(\X)\right) - \frac{1}{2} \\
&> \frac{1}{2} + \varepsilon- \frac{1}{2} \\
&= \varepsilon
\end{align}
\end{subequations}
However, this contradicts the assumption that $\Iacc(h(\X) \rightarrow \Z) < \varepsilon$.
This completes the proof.
\end{proof}

\clearpage
\newpage

\subsection{Experimental Setting}
\label{app:experiments}

In this appendix, we give additional information necessary to replicate our experiments (\cref{sec:experiments}).

\paragraph{Data.} We use the same train--dev--test split of the biographies dataset used by \citet{ravfogel-etal-2020-null}, resulting in training, evaluation and test sets of sizes 255,710, 39,369, and 98,344, respectively. We reduce the dimensionality of the representations to 256 using PCA. The dataset is composed of short biographies, annotated with both gender and profession. We randomly sampled 15 pairs of professions from the dataset: (\emph{professor}, \emph{attorney}),  (\emph{journalist}, \emph{surgeon}),  (\emph{physician}, \emph{nurse}),  (\emph{professor}, \emph{physician}),  (\emph{psychologist}, \emph{teacher}),  (\emph{attorney}, \emph{teacher}),  (\emph{physician}, \emph{journalist}),  (\emph{professor}, \emph{dentist}),  (\emph{teacher}, \emph{surgeon}),  (\emph{psychologist}, \emph{surgeon}),  (\emph{photographer}, \emph{surgeon}),  (\emph{attorney}, \emph{psychologist}),  (\emph{physician}, \emph{teacher}),  (\emph{professor}, \emph{teacher}),  (\emph{professor}, \emph{psychologist})

\paragraph{Optimization.} We run RLACE \citep{ravfogel2022linear} with a simple SGD optimization, with a learning rate of $0.005$, a weight decay of $1e^{-5}$ and a momentum of $0.9$, chosen by experimenting with the development set. We use a batch size of 128. The algorithm is based on an adversarial game between a predictor that aims to predict gender, and an orthogonal projection matrix adversary that aims to prevent gender classification. We choose the adversary which yielded \emph{highest} classification loss. All training is done on a single NVIDIA GeForce GTX 1080 Ti GPU.

\paragraph{Estimating $\Vfam$-information.} After running RLACE, we get an approximately linearly-guarded representation by projecting $\xxn \gets P\xxn$, where $P$ is the orthogonal projection matrix returned by RLACE. We validate guardedness by training log-linear models over the projected representations; they achieve accuracy less than $2\%$ above the majority accuracy. Then, to estimate $\Ivyzadv$, we fit a simple neural network of the form of a composition of two log-linear models. The inner model either has a single hidden neuron with a logistic activation (in the binary experiment), or $K=2,4,8,16,32,64$ hidden neurons with softmax activations, in the multiclass experiment (\cref{sec:multi-experiment}). The networks are trained end to end to recover binary gender for $25000$ batches of size $2048$. Optimization is done with Adam with the default parameters. We use the loss of the second log-linear model to estimate $\Ivyzadv$, according to \cref{def:information-theoretic-guardedness}.

\end{document}